\newcolumntype{C}[1]{>{\centering\arraybackslash}p{#1}}
\newtheorem{assumption}{Assumption}
\begin{document}
%
%\title{Contribution Title\thanks{Supported by organization x.}}
\title{A Framework for Evaluating Gradient Leakage Attacks in Federated Learning}
%
%\titlerunning{Abbreviated paper title}
% If the paper title is too long for the running head, you can set
% an abbreviated paper title here
%
% \author{First Author\inst{1}\orcidID{0000-1111-2222-3333} \and
% Second Author\inst{2,3}\orcidID{1111-2222-3333-4444} \and
% Third Author\inst{3}\orcidID{2222--3333-4444-5555}}
\author{Wenqi Wei\and
Ling Liu \and Margaret Loper \and
\\ Ka-Ho Chow \and Mehmet Emre Gursoy \and Stacey Truex \and Yanzhao Wu}
\authorrunning{W. Wei et al.}
\titlerunning{Federated Learning Privacy Leakage Evaluation Framework}% Part of RIGHT running header
% First names are abbreviated in the running head.
% If there are more than two authors, 'et al.' is used.
%
\institute{Georgia Institute of Technology, Atlanta GA 30332, USA \\
\email{wenqiwei@gatech.edu, ling.liu@cc.gatech.edu, margaret.loper@gtri.gatech.edu
 \\ \{staceytruex, memregursoy, khchow,yanzhaowu\}@gatech.edu, }
%\url{http://www.springer.com/gp/computer-science/lncs} \and
%ABC Institute, Rupert-Karls-University Heidelberg, Heidelberg, Germany\\
%\email{\{abc,lncs\}@uni-heidelberg.de}
}
\maketitle              % typeset the header of the contribution
\vspace{-0.2cm}
\begin{abstract}
Federated learning (FL) is an emerging distributed machine learning framework for collaborative model training with a network of clients (edge devices). FL offers default client privacy by allowing clients to keep their sensitive data on local devices and to only share local training parameter updates with the federated server. However, recent studies have shown that even sharing local parameter updates from a client to the federated server may be susceptible to gradient leakage attacks and intrude the client privacy regarding its training data. In this paper, we present a principled framework for evaluating and comparing different forms of client privacy leakage attacks. We first provide formal and experimental analysis to show how adversaries can reconstruct the private local training data by simply analyzing the shared parameter update from local training (e.g., local gradient or weight update vector). We then analyze how different hyperparameter configurations in federated learning and different settings of the attack algorithm may impact on both attack effectiveness and attack cost.
Our framework also measures, evaluates, and analyzes the effectiveness of client privacy leakage attacks under different gradient compression ratios when using communication efficient FL protocols. Our experiments also include some preliminary mitigation strategies to highlight the importance of providing a systematic attack evaluation framework towards an in-depth understanding of the various forms of client privacy leakage threats in federated learning and developing theoretical foundations for attack mitigation.

\keywords{Privacy Leakage Attacks \and  Federated Learning \and Attack Evaluation Framework.}
\end{abstract}

\vspace{-0.5cm}
\section{Introduction}

Federated learning enables the training of a high-quality ML model in a decentralized manner over a network of devices with unreliable and intermittent network connections~\cite{vanhaesebrouck2016decentralized,mcmahan2017communication,yang2019federated,konevcny2016federated,bonawitz2019towards,zhao2018federated}. In contrast to the scenario of prediction on edge devices, in which an ML model is first trained in a highly controlled Cloud environment and then downloaded to mobile devices for performing predictions, federated learning brings model training to the devices while supporting continuous learning on device. A unique feature of federated learning is to decouple the ability of conducting machine learning from the need of storing all training data in a centralized location~\cite{kamp2018efficient}.

Although federated learning by design provides the default privacy of allowing thousands of clients (e.g., mobile devices) to keep their original data on their own devices, while jointly learn a model by sharing only local training parameters with the server. Several recent research efforts have shown that the default privacy in FL is insufficient for protecting the underlaying training data from privacy leakage attacks by gradient-based reconstruction~\cite{geiping2020inverting,zhu2019deep,zhao2020idlg}. By intercepting the local gradient update shared by a client to the FL server before performing federated averaging~\cite{mcmahan2016federated,ma2015adding,kamp2018efficient}, the adversary can reconstruct the local training data with high reconstruction accuracy, and hence intrudes the client privacy and deceives the FL system by sneaking into client confidential training data illegally and silently, making the FL system vulnerable to client privacy leakage attacks (see Section~\ref{threat-model} on Threat Model for detail).

In this paper, we present a principled framework for evaluating and comparing different forms of client privacy leakage attacks. Through attack characterization, we present an in-depth understanding of different attack mechanisms and attack surfaces that an adversary may leverage to reconstruct the private local training data by simply analyzing the shared parameter updates (e.g., local gradient or weight update vector). Through introducing multi-dimensional evaluation metrics and developing evaluation testbed, we provide a measurement study and quantitative and qualitative analysis on how different configurations of federated learning and different settings of the attack algorithm may impact the success rate and the cost of the client privacy leakage attacks. Inspired by the attack effect analysis, we present some mitigation strategies with preliminary results to highlight the importance of providing a systematic evaluation framework for comprehensive analysis of the client privacy leakage threats and effective mitigation methods in federated learning.

The rest of the paper is organized as follows: Section~\ref{overview} presents the overview of the problem statement, including defining federated learning to establish some terminology required for our attack analysis, the threat model that describes the baseline assumptions about clients, FL server and adversaries, as well as the general formulation of client privacy leakage attack. Section~\ref{evaluation} describes three main components of our evaluation framework: the impact of attack parameter configurations on attack effectiveness and cost, the impact of FL hyperparameter configurations on attack effectiveness and cost, and the quantitative measurement metrics for attack effect and cost evaluation. Section~\ref{experiments} reports the measurement study and experimental analysis results on four benchmark image datasets. We conclude the paper with related work and a summary.

% and two relational datasets from adult income UCI and breast cancer Wisconsin.

\vspace{-0.4cm}

\section{Problem Formulation}
\label{overview}

\subsection{Federated learning}

\label{section:fedlearning}

% Federated learning \cite{mcmahan2016communication} distributes the training of a deep neural network across $K$ participants. the machine learning task is decoupled from the centralized server to each client node to preserve the privacy of the local data and reduces and reduce the communication consumption.

In federated learning, the machine learning task is decoupled from the centralized server to a set of $N$ client nodes. Given the unstable client availability~\cite{mcmahan2017federated}, for each round of federated learning, only a small subset of $K_t<$ clients out of all $N$ participants will be chosen to participate in the joint learning.

\textbf{Local Training at a Client}: Upon notification of being selected at round $t$, a client will download the global state $w(t)$ from the server, perform a local training computation on its local dataset and the global state, i.e., $w_k(t+1)=w_k(t)-\eta \nabla {w_k(t)}$, where $w_k(t)$ is the local model parameter update at round $t$ and $\nabla {w}$ is the gradient of the trainable network parameters. Clients can decide its training batch size $B_t$ and the number of local iterations before sharing.

\textbf{Update Aggregation at FL Server}: Upon receiving the local updates from all $K_t$ clients, the server incorporates these updates and update its global state, and initiates the next round of federated learning. Given that local updates can be in the form of either gradient or model weight update, thus two update aggregation implementations are the most representative:

\textbf{Distributed SGD.} At each round, each of the $K_t$ clients trains the local model with the local data and uploads the local gradients to the FL server. The server iteratively aggregates the local gradients from all $K_t$ clients into the global model, and check if the convergence condition of FL task is met and if not, it starts the next iteration round~\cite{lin2017deep,liu2020accelerating,yang2019federated,yao2019federated}.
$$w(t + 1) = w(t) - \eta \sum\nolimits_{k = 1}^{{K_t}} \frac{n_t}{n}  \nabla {w_k}(t),$$
where $\eta$ is the global learning rate and $ \frac{n_t}{n}$ is the weight of client $k$. Here we adopt the same notation as in reference~\cite{mcmahan2017communication} so that $n_k$ is the number of data points at client $k$ and $n$ indicates the amount of total data from all participating clients at round $t$.

\textbf{Federated averaging.} At each round, each of the $K_t$ clients uploads the local training parameter update to the FL server and the server iteratively performs a weighted average of the received weight parameters to update the global model, and starts the next iteration round $t+1$ unless it reaches the convergence~\cite{bagdasaryan2018backdoor,mcmahan2017communication}.
$$w(t + 1) = \sum\nolimits_{k = 1}^{{K_t}} \frac{n_t}{n}  {w_k}(t+1).$$
Let $\Delta  {w_k}(t)$ denote the difference between the model parameter update before the iteration of training and the model parameter update after the training for client $k$. Below is a variant of this method~\cite{geyer2017differentially}:
$$w(t + 1) = w(t) + \sum\nolimits_{k = 1}^{{K_t}} {\frac{n_t}{n} \Delta } {w_k}(t).$$

\textbf{Efficiency of FL Communication Protocol.}
The baseline communication protocol is used in many early federated learning implementations: the client sends a full vector of local training parameter update back to the FL server in each round. For federated training of large models on complex data, this step is known to be the bottleneck of Federated Learning. The communication efficient FL protocols have been proposed~\cite{mcmahan2017communication,konevcny2016federated}, which improves communication-efficiency of parameter update sharing by employing high precision vector compression mechanisms, such as structured updates and sketched updates. The former directly learns an update from a pre-specified structure, such as a low-rank matrix and random masks. The latter compresses the learned full vector of model parameter update to ensure a high compression ratio with a low-value loss before sending it to the server.

Our framework will study the impact of different configurations of FL hyperparameters on the success rate and cost of privacy leakage attacks. For instance, we will show that both baseline protocol and communication efficient protocol (e.g., sketched updates) are vulnerable to client gradient leakage attacks (see Section~\ref{sec3.2} and Section~\ref{sec4.2}).

%\vspace{-0.4cm}

\subsection{Threat Model}

\label{threat-model}

%% Omit this para.
%Following the training procedure of federated learning as described in Section~\ref{section:fedlearning}, we assume $k (k \geq 2)$ clients that agree on a common learning objective and train collaboratively on a shared model. Without loss of generality, the data of all clients are considered as non-IID distributed, i.e., each client gets two shards of the entire training data and most clients will have samples from two classes only. In this way, the data distribution of an arbitrary client $k (k = 1,2,...N)$ is independent of that of the other clients and the client-specific data and its distribution is considered as its client-level privacy.

In an FL system, clients are the most vulnerable attack surface for the client privacy leakage (CPL) attack because the client is the one sending its local training parameter update to the FL server.
We assume that clients can be compromised in a limited manner: an adversary cannot gain access to the private training data but may intercept the local parameter update to be sent to the FL server and be able to access and run the saved local model executable (checkpoint data) on the compromised client to launch white-box gradient leakage attack.

On the other hand, we assume that the federated server is honest but curious. Namely, the FL server will honestly perform the aggregation of local parameter updates and manage the iteration rounds for jointly learning. However, the FL server may be curious and may analyze the periodic updates from certain clients to perform client privacy leakage attacks and gain access to the private training data of the victim clients. Given that the gradient-based aggregation and model weight-based aggregation are mathematically equivalent, one can obtain the local model parameter difference from the local gradient and the local training learning rate. In this paper, gradient-based aggregation is used without loss of generality. It is worth noting that even if the network connection between client and server is secure, the client privacy leakage attack can happen on a compromised client before the local parameter update is prepared for upload to the server.

Finally, distributed training can happen in either federated scenario with a central server~\cite{li2014scaling} or a decentralized scenario where clients are connected via a peer to peer network~\cite{roy2019braintorrent}. In the decentralized setting, a client node will first perform the local computation to update its local model parameter and then sends the updated gradients to its neighbor nodes. The client privacy leakage attack can happen in both scenarios on the compromised client nodes. The federated scenario is assumed in this paper.

\subsection{The Client Privacy Leakage (CPL) Attack: An Overview}
\label{sec:2.3}

The client privacy leakage attack is a gradient-based feature reconstruction attack, in which the attacker can design a gradient-based reconstruction learning algorithm that will take the gradient update at round $t$, say $\nabla w_k(t)$, to be shared by the client, to reconstruct the private data used in the local training computation. For federated learning on images or video clips, the reconstruction algorithm will start by using a dummy image of the same resolution as its attack initialization seed, and run a test of this attack seed on the intermediate local model, to compute a gradient loss using a vector distance loss function between the gradient of this attack seed and the actual gradient from the client local training. The goal of this reconstruction attack is to iteratively add crafted small noises to the attack seed such that the generated gradient from this reconstructed attack seed will approximate the actual gradient computed on the local training data. The reconstruction attack terminates when the gradient of the attack seed reconstructed from the dummy initial data converges to the gradient of the training data. When the gradient-based reconstruction loss function is minimized, the reconstructed attack data will also converge to the training data with high reconstruction confidence.
Algorithm~\ref{algo1} gives a sketch of the client privacy leakage attack method.

\vspace{-0.2cm}

\begin{algorithm}
\caption{Gradient-based Reconstruction Attack}
\begin{algorithmic}[1]
\STATE  \textbf{Inputs:} \\
$f(x; w(t))$: Differentiable learning model,
$\nabla w_k(t)$: gradients produced by the local training on private training data $(x; y)$ at client $k$,
$w(t)$, $w(t+1)$: model parameters before and after the current local training on $(x; y)$,
$\eta_k$ learning rate of local training \\
\textbf{Attack configurations}:
$INIT(x.type)$: attack initialization method,
$\mathbb{T}$: attack termination condition,
$\eta'$: attack optimization method,
%%update iteration step,
$\alpha$: regularizer ratio. \
\STATE \textbf{Output:} reconstructed training data  $(x_{rec}; y_{rec})$ \
\STATE \textbf{Attack procedure} \
\IF{$w_k(t+1)$:}
\STATE  $\Delta w_{k}(t) \leftarrow w_k(t+1) - w(t)$ \
\STATE  $\nabla w_k(t) \leftarrow \frac{\Delta w_{k}(t)}{\eta_k}$ \
\ENDIF
\STATE  $x^0_{rec} \leftarrow \textit{INIT}(x.type)$ \
\STATE $y_{rec} \leftarrow  \arg\min_i(\nabla_i w_k(t)$) \
\FOR{$\tau$ in $\mathbb{T}$}
\STATE  $\nabla w^{\tau}_{att}(t) \leftarrow  \frac{\partial loss(f(x^{\tau}_{rec},w(t)), y_{rec})}{\partial w(t)}$ \
\STATE $D^{\tau} \leftarrow  ||\nabla w^{\tau}_{att}(t)-\nabla w_k(t)||^2 + \alpha||f(x^{\tau}_{rec},w(t))-y_{rec}||^2$  \
\STATE $ x^{\tau+1}_{rec} \leftarrow x^{\tau}_{rec} - \eta'  \frac{\partial D^{\tau}}{\partial x^{\tau}_{rec}}$
\ENDFOR
\end{algorithmic}
\label{algo1}
\end{algorithm}

\vspace{-0.2cm}

In Algorithm~\ref{algo1}, line~4-6 convert the weight update to gradient when the weight update is shared between the FL server and the client. The learning rate $\eta_k$ for local training is assumed to be identical across all clients in our prototype system. Line~8 invokes the dummy attack seed initialization, which will be elaborated in Section~\ref{sec:3.1attackconfig}. Line~9 is to get the label from the actual gradient shared from the local training. Since the local training update towards the ground-truth label of the training input data should be the most aggressive compared to other labels, the sign of gradient for the ground-truth label of the private training data will be different than other classes and its absolute value is usually the largest.
%%
%%holds when there is only a single pair of input and label in the batch or the entire batch contains inputs from one label. Since the training update towards the ground-truth label should be the most aggressive compared to other labels, the sign of gradient for the ground-truth training label will be different than other classes and its absolute value is the largest.
%%
Line~10-14 presents the iterative reconstruction process that produces the reconstructed private training data based on the client gradient update. If the reconstruction algorithm converges, then the client privacy leakage attack is successful, and else the CPL attack is failed. Line~12-13 show that when the $L_2$ distance between the gradients of the attack reconstructed data and the actual gradient from the private training data is minimized, the reconstructed attack data from the dummy seed converges to the private local training data, leading to the client privacy leakage. In line~12, a label-based regularizer is utilized to improve the stability of the attack optimization. An alternative way to reconstruct the label of the private training data is to initialize a dummy label and feed it into the iterative approximation algorithm for attack optimization~\cite{zhu2019deep}, in a similar way as the content reconstruction optimization. Figure~\ref{fig:sec2.3example} provides a visualization of four illustrative example attacks over four datasets: LFW~\cite{huang2008labeled}, CIFAR100~\cite{krizhevsky2009learning}, MNIST~\cite{lecun1998mnist}, and CIFAR10~\cite{krizhevsky2009learning}.
\vspace{-0.7cm}
\begin{figure}[ht]
\centerline{\includegraphics[scale=.45]{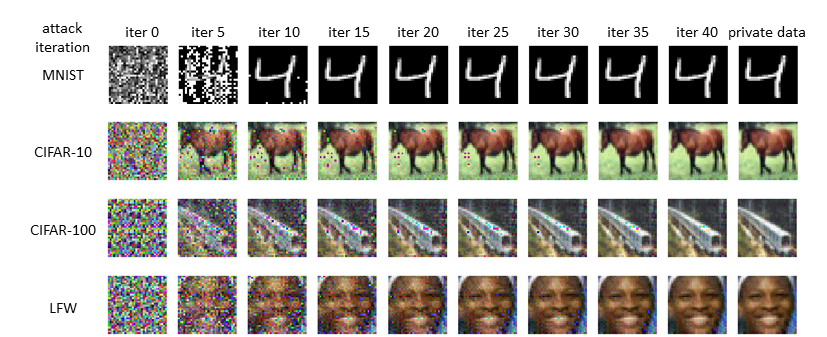}}
\caption{\small Example illustration of the Client Privacy Leakage attack}
\label{fig:sec2.3example}
\vspace{-0.4cm}
\end{figure}
\vspace{-0.2cm}

%%Since the attack will not interfere with the training process and can be generalized to different settings, there is no need to set federated learning configurations in the attack. Since we focus on the analysis of the attacks, we will use attack iterations throughout the paper unless otherwise defined. Note that the white-box access model can either be the local model before local update or the global model that is distributed to the client for local training.

We make two interesting observations from Algorithm~\ref{algo1}. {\em First}, multiple factors in the attack method could impact the attack success rate (ASR) of the client privacy leakage attack, such as the dummy attack seed data initialization method (line~8), the attack iteration termination condition ($\mathbb{T}$), the selection of the gradient loss (distance) function (line~12), the attack optimization method (line~13). {\em Second}, the configuration of some hyperparameters in federated learning may also impact the effectiveness and cost of the CPL attack, including batch size, training data resolution, choice of activation function, and whether the gradient update is uploaded to the FL server using baseline communication protocol or a communication efficient method. In the next section, we present the design of our evaluation framework to further characterize the client privacy leakage attack of different forms and introduce cost-effect metrics to measure and analyze the adverse effect and cost of CPL attacks. By utilizing this framework, we provide a comprehensive study on how different attack parameter configurations and federated learning hyperparameter configurations may impact the effectiveness of the client privacy leakage attacks.

\vspace{-0.2cm}
\section{Evaluation Framework}
\label{evaluation}

\subsection{Attack Parameter Configuration}
\label{sec:3.1attackconfig}

\subsubsection{Attack Initialization:}

We first study how different methods for generating the dummy attack seed data may influence the effectiveness of a CPL attack in terms of reconstruction quality or confidence as well as reconstruction time and convergence rate. A straightforward dummy data initialization method is to use a random distribution in the shape of dummy data type and we call this baseline the random initialization (CPL-random). Although random initialization is also used in~\cite{zhu2019deep,zhao2020idlg,geiping2020inverting}, this is, to the best of our knowledge, the first study on variations of attack initiation methods. To understand the role of random seed in the CPL attack, it is important to understand the difference of the attack reconstruction learning from the normal deep neural network (DNN) training. In a DNN training, it takes as the training input both the fixed data-label pairs and the initialization of the learnable model parameters, and iteratively learn the model parameters until the training converges, which minimizes the loss with respect to the ground truth labels. In contrast, the CPL attack performs reconstruction attack by taking a dummy attack seed input data, a fixed set of model parameters, such as the actual gradient updates of a client local training, and the gradient derived label as the reconstructed label $y_{rec}$, its attack algorithm will iteratively reconstruct the local training data used to generate the gradient, $\nabla w_k(t)$, by updating the dummy synthesized seed data, following the attack iteration termination condition $\mathbb{T}$, denoted by  $\{x_{rec}^0,x_{rec}^1,...{x_s}^{\mathbb{T}}\} \in \mathbb{R}^d$, such that the loss between the gradient of the reconstructed data $x_{rec}^i$ and the actual gradient $\nabla w_k(t)$ is minimized. Here $x_{rec}^0$ denotes the initial dummy seed.

\begin{theorem} (\textbf{CPL Attack Convergence Theorem }) Let $x_{rec}^*$ be the optimal synthesized data for $f(x)$ and attack iteration $t\in \{0,1,2,...T\}$. Given the convexity and Lipschitz-smoothness assumption, the convergence of the gradient-based reconstruction attack is guaranteed with:
\begin{equation}
    f(x_{rec}^{\mathbb{T}})-f(x_{rec}^*) \leq  \frac{2L||x_{rec}^0-x_{rec}^*||^2}{\mathbb{T}}.
\end{equation}
\label{theorem1}
\end{theorem}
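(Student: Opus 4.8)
The plan is to recognize that the function $f$ being minimized is the gradient-matching reconstruction loss $D^{\tau}$ from line~12 of Algorithm~\ref{algo1}, viewed as a function of the synthesized input $x_{rec}$, and that line~13 is exactly one gradient-descent step on this objective with step size $\eta'$. Under the stated convexity and $L$-Lipschitz-smoothness assumptions on $f$, the target inequality is the classical $O(1/\mathbb{T})$ convergence rate for gradient descent, so I would assemble the standard three ingredients: the descent lemma, the convexity inequality, and a telescoping sum, with $x_{rec}^*$ taken as the minimizer $\arg\min f$.

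First I would invoke $L$-smoothness to obtain the descent lemma $f(y) \le f(x) + \langle \nabla f(x),\, y-x\rangle + \frac{L}{2}\|y-x\|^2$. Applying this to the update $x_{rec}^{\tau+1} = x_{rec}^{\tau} - \eta' \nabla f(x_{rec}^{\tau})$, with the step size calibrated against the smoothness constant, yields a per-step decrease of the form $f(x_{rec}^{\tau+1}) \le f(x_{rec}^{\tau}) - c\,\|\nabla f(x_{rec}^{\tau})\|^2$ for a positive constant $c$. This serves a double purpose: it certifies that the objective sequence $f(x_{rec}^{\tau})$ is monotone non-increasing, and it supplies the quadratic term needed in the next step.

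Second, I would use convexity in the form $f(x_{rec}^{\tau}) - f(x_{rec}^*) \le \langle \nabla f(x_{rec}^{\tau}),\, x_{rec}^{\tau} - x_{rec}^*\rangle$ and combine it with the descent inequality. The key algebraic move is to complete the square on the inner-product and gradient-norm terms so that the one-step suboptimality telescopes against squared distances to the optimum, producing an inequality of the shape $f(x_{rec}^{\tau+1}) - f(x_{rec}^*) \le \frac{1}{2\eta'}\bigl(\|x_{rec}^{\tau} - x_{rec}^*\|^2 - \|x_{rec}^{\tau+1} - x_{rec}^*\|^2\bigr)$. Summing over $\tau = 0,\dots,\mathbb{T}-1$ collapses the right-hand side to $\frac{1}{2\eta'}\|x_{rec}^0 - x_{rec}^*\|^2$, since the intermediate distance terms cancel and the final one is nonnegative.

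Finally, I would invoke the monotonicity established in the first step: because $f(x_{rec}^{\tau})$ is non-increasing, the last iterate $f(x_{rec}^{\mathbb{T}})$ is bounded by the average of the $\mathbb{T}$ summed terms, giving $\mathbb{T}\bigl(f(x_{rec}^{\mathbb{T}}) - f(x_{rec}^*)\bigr) \le \frac{1}{2\eta'}\|x_{rec}^0 - x_{rec}^*\|^2$, and dividing by $\mathbb{T}$ yields the claimed bound. The precise numerator constant is governed entirely by the step-size regime; a conservative choice on the order of $\eta' = \tfrac{1}{4L}$ recovers exactly the stated factor $2L$, whereas $\eta' = 1/L$ would give the sharper textbook constant $L/2$. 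I expect the main obstacle to be conceptual rather than computational: the convexity hypothesis is genuinely restrictive, since the gradient-matching loss is highly non-convex in $x_{rec}$ for deep networks, so the theorem should be read as an idealized-regime guarantee, and I would take care to state explicitly which step-size calibration produces the constant appearing in the statement.
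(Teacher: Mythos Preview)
Your proposal is correct and follows the standard ``telescoping distances plus monotone averaging'' route for the $O(1/\mathbb{T})$ rate of gradient descent on smooth convex functions. The paper's proof, however, takes a genuinely different path (following the cited lecture notes of Gower): it first isolates co-coercivity as a separate lemma, uses it to show the iterates are non-expansive toward $x_{rec}^*$, then bounds the suboptimality by Cauchy--Schwarz as $f(x_{rec}^t)-f(x_{rec}^*)\le\|\nabla f(x_{rec}^t)\|\,\|x_{rec}^0-x_{rec}^*\|$, and substitutes this into the descent inequality to produce a quadratic recurrence in the suboptimality gap itself; this recurrence is solved by passing to reciprocals and telescoping $\tfrac{1}{f(x_{rec}^{t+1})-f(x_{rec}^*)}-\tfrac{1}{f(x_{rec}^t)-f(x_{rec}^*)}\ge\beta$. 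Your route is more elementary---no co-coercivity lemma, no reciprocal trick---and makes the step-size dependence explicit, while the paper's route works directly on the suboptimality sequence and never needs your final averaging step. Both are valid; note that the paper fixes $\eta'=1/L$ and obtains the constant $2L$ from the reciprocal argument, whereas, as you correctly observe, your averaging argument with $\eta'=1/L$ would actually give the sharper constant $L/2$, which of course still implies the stated bound.
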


\vspace{-0.4cm}

The above CPL Attack Convergence theorem is derived from the well-established Convergence Theorem of Gradient Descent~\cite{convergencetheorem}. Due to the limitation of space, the formal proof of Theorem~\ref{theorem1} is provided in the appendix. Note that the convexity assumption is generally true since the $d$-dimension trainable synthesized data can be seen as a one-hidden-layer network with no activation function. The fixed model parameters are considered as the input with optimization of the least square estimation problem as stated in Line~12 of Algorithm~\ref{algo1}.

According to the CPL Attack Convergence Theorem, the convergence of the CPL attack is closely related to the initialization of the dummy data $x_{rec}^0$. This motivates us to investigate different ways to generate dummy attack seed data. Concretely, we argue that different random seeds may have different impacts on both reconstruction learning efficiency (confidence) and reconstruction learning convergence (time or the number of iteration steps). Furthermore, using geometrical initialization as those introduced in~\cite{rossi1994geometrical} not only can speed up the convergence but also ensure attack stability. Consider a single layer of the neural network: $g(x) = \sigma(wx+b)$, a geometrical initialization considers the form $g(x) = \sigma(w_*(x-b_*)$ instead of directly initialing $w$ and $b$ with random distribution. For example, according to~\cite{zhang1992wavelet}, the following partial derivative of the geometrical initialization.
\vspace{-0.2cm}
\begin{equation}
    \frac{{\partial g}}{{\partial w_*}} = \sigma '(w_*(x - b_*))(x - b_*),
    \vspace{-0.3cm}
\end{equation}
is more independent from translation of the input space than $\frac{{\partial g}}{{\partial w}} = \sigma '(wx + b)x,$ and is therefore more stable.

\vspace{-0.7cm}

\begin{figure}[ht]
\centerline{\includegraphics[scale=.32]{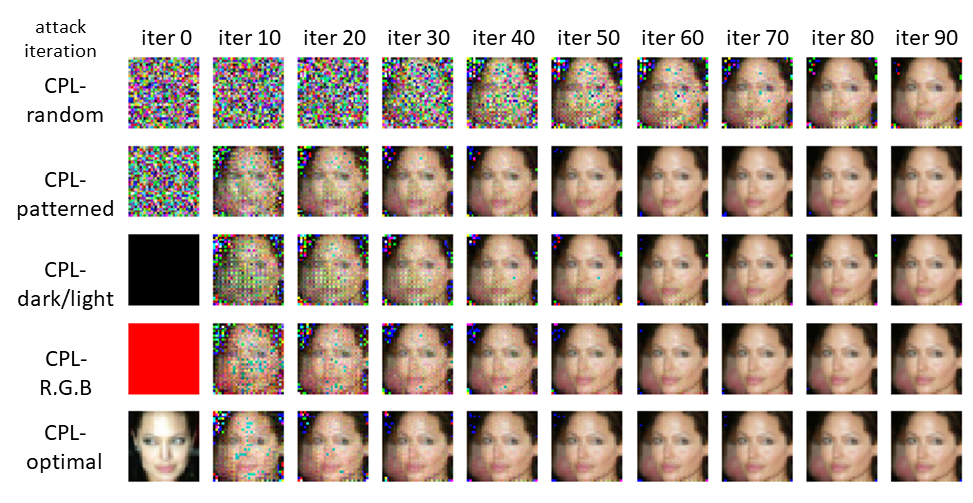}}
\caption{\small Visualization of different initialization}
\label{fig:init_vis}
\vspace{-0.4cm}
\end{figure}

\vspace{-0.2cm}

Figure~\ref{fig:init_vis} provides a visualization of five different initialization methods and their impact on the CPL attack in terms of reconstruction quality and convergence (\#iterations). In addition to CPL-random,  CPL-patterned is a method that uses patterned random initialization. We initialize a small portion of the dummy data with a random seed and duplicate it to the entire feature space. An example of the portion can be 1/4 of the feature space.
CPL-dark/light is to use a dark (or light) seed of the input type (size), whereas CPL-R.G.B. is to use red or green or blue solid color seed of the input type (size). CPL-optimal refers to the theoretical optimal initialization method, which uses an example from the same class as the private training data that the CPL attack aims to reconstruct. We observe from Figure~\ref{fig:init_vis} that CPL-patterned, CPL-R.G.B., and CPL-dark/light can outperform CPL-random with faster attack convergence and more effective reconstruction confidence.
We also include CPL-optimal to show that different CPL initializations can effectively approximate the theoretical optimal initialization in terms of reconstruction effectiveness.

Figure~\ref{fig:init_seed} shows that the CPL attacks are highly sensitive to the choice of random seeds. We conduct this set of experiments on LFW and CIFAR100 and both confirm consistently our observations: different random seeds lead to diverse convergence processes with different reconstruction quality and confidence. From Figure~\ref{fig:init_seed}, we observe that even with the same random seed, attack with patterned initialization is much more efficient and stable than the CPL-random. Moreover, there are situations where the private label of the client training data is successfully reconstructed but the private content reconstruction fails (see the last row for both LFW and CIFAR100).

\vspace{-0.6cm}

\begin{figure}[ht]
\begin{minipage}{0.49\linewidth}
 \centerline{\includegraphics[scale=.38]{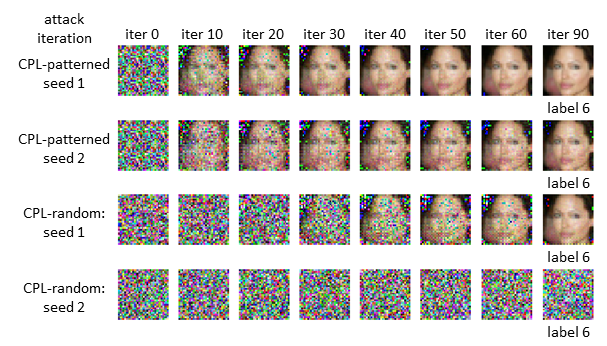}}
 \subcaption{\small LFW}
 \label{fig:init_seed_lfw}
\end{minipage}
\begin{minipage}{0.49\linewidth}
 \centerline{\includegraphics[scale=.38]{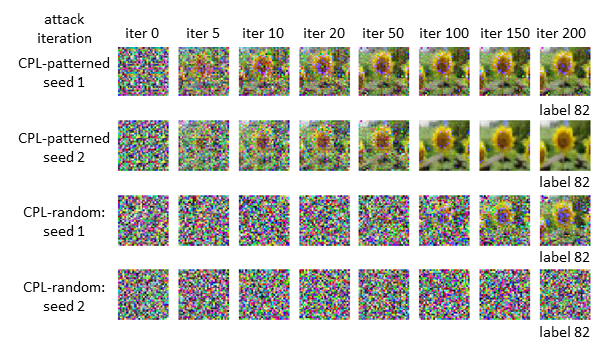}}
  \subcaption{\small CIFAR100}
  \label{fig:init_seed_cifar100}
\end{minipage}
\caption{\small Effect of different random seed}
\label{fig:init_seed}
\vspace{-0.4cm}
\end{figure}

%\vspace{-0.8cm}

\subsubsection{Attack Termination Condition:} The effectiveness of a CPL attack is also sensitive to the setting of the attack termination condition. Recall Section~\ref{sec:2.3} and Algorithm~\ref{algo1}, there are two decision factors for termination. One is the maximum attack iteration and the other is the $L_2$-distance threshold of the gradient loss, i.e., the difference between the gradient from the reconstructed data and the actual gradient from the local training using the private local data.

Consider the configuration of the maximum number of attack iterations, Table~\ref{table:termination} compares the six different settings of attack iteration termination condition. It shows that when it is too small, such as 10 or 20, no matter how to optimize the attack initialization method, the CPL attack will fail for all or most of the datasets. However, when it is set to sufficiently larger, say 100 or 300, choosing a good attack initialization method matters more significantly. For example, CPL-patterned can get an attack success rate higher than the CPL-random for all three datasets. Also, CPL-random shows unstable attack performance: For LFW, it has slightly higher ASR when choosing the termination of 300 over that of 100 iterations. However, for CIFAR10 and CIFAR100, it has significantly higher ASR when choosing the termination of 300 iterations over that of 100. This set of experiments also shows the good configuration of the total number of termination iterations can be a challenging problem.

The second factor used for setting the attack termination condition is the $L_2$-distance threshold of the gradient difference between the reconstructed data and the private local data. This factor is dataset-dependent, and our experiments with the four benchmark datasets show that a $L2$-distance threshold, such as 0.0001, is a good option in terms of generalization.

\vspace{0.3cm}

\noindent
\begin{minipage}{\textwidth}
\begin{minipage}{0.49\linewidth}
\scalebox{0.70}{
\begin{tabular}{|c|c|c|c|c|c|c|c|}
\hline
\multicolumn{2}{|c|}{maximum attack iteration} & 10 & 20   & 30   & 50    & 100   & 300   \\ \hline
\multirow{2}{*}{LFW}         & CPL-patterned   & 0  & 0.34 & 0.98 & 1     & 1     & 1     \\ \cline{2-8}
                             & CPL-random            & 0  & 0    & 0    & 0.562 & 0.823 & 0.857 \\ \hline
\multirow{2}{*}{CIFAR10}     & CPL-patterned   & 0  & 0.47 & 0.93 & 0.973 & 0.973 & 0.973 \\ \cline{2-8}
                             & CPL-random            & 0  & 0    & 0    & 0     & 0.356 & 0.754 \\ \hline
\multirow{2}{*}{CIFAR100}    & CPL-patterned   & 0  & 0    & 0.12 & 0.85  & 0.981 & 0.981 \\ \cline{2-8}
                             & CPL-random            & 0  & 0    & 0    & 0     & 0.23  & 0.85  \\ \hline
\end{tabular}
}
 \captionof{table}{\small Effect of termination condition}
 \label{table:termination}
\end{minipage}
\hspace{0.5cm}
  %\hfill
\begin{minipage}{0.49\linewidth}
 \centerline{\includegraphics[scale=.35]{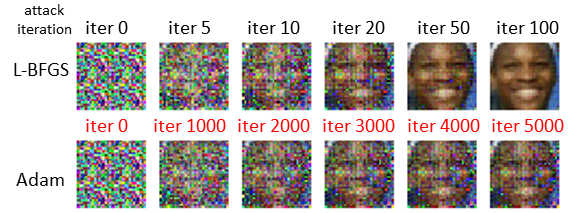}}
  \captionof{figure}{\small Effect of attack optimization}
  \label{fig:attack_optimization}
\end{minipage}
%\caption{Stability on the variance of initialization.}
%\label{figure:init_seed}
\vspace{-0.4cm}
\end{minipage}

\subsubsection{Gradient Loss (Distance) Function:} In the CPL attack Algorithm~1, we use $L_2$ distance function as the gradient loss function. There are other alternative vector distance functions, such cosine similarity, entropy, and so forth, can be applied. ~\cite{janocha2017loss} has studied the impact of different distance functions on training efficiency. It shows that using some loss functions might lead to slower training, while others can be more robust to noise in the training set labeling and also slightly more robust to noise in the input space.

\vspace{-0.5cm}

\subsubsection{Attack optimization:} Optimization methods, such as Stochastic Gradient descent~\cite{robbins1951stochastic}, Momentum~\cite{qian1999momentum}, Adam~\cite{kingma2014adam}, and Adagrad~\cite{duchi2011adaptive} can be used to iteratively update the dummy data during the reconstruction of a CPL attack. While the first-order optimization techniques are easy to compute and less time consuming, the second-order techniques are better in escaping the slow convergence paths around the saddle points~\cite{battiti1992first}. Figure~\ref{fig:attack_optimization} shows a comparison of L-BFGS~\cite{fletcher2013practical} and Adam and their effects on the CPL-patterned attack for LFW dataset. It shows that choosing an appropriate optimizer can significantly improve attack effectiveness. In the rest of the paper, L-BFGS is used in our experiments.

\vspace{-0.2cm}

\subsection{Hyperparameter Configurations in Federated learning}
\label{sec3.2}

\subsubsection{Batch size:} Given that all forms of CPL attack methods are reconstruction learning algorithms that iteratively learn to reconstruct the private training data by inferencing over the actual gradient to perform iterative updates on the dummy attack seed data, it is obvious that a CPL attack is most effective when working with the gradient generated from the local training data of batch size 1. Furthermore, when the input data examples in a batch of size $B$ belongs to only one or two classes, which is often the case for mobile devices and the non-i.i.d distribution of the training data~\cite{zhao2018federated}, the CPL attacks can effectively reconstruct the training data of the entire batch. This is especially true when the dataset has low inter-class variation, e.g., face and digit recognition. Figure~\ref{fig:batchsize} shows the visualization of performing a CPL-patterned attack on the LFW dataset with four different batch sizes.

\vspace{-0.6cm}

\begin{figure}[ht]
 \centerline{\includegraphics[scale=.40]{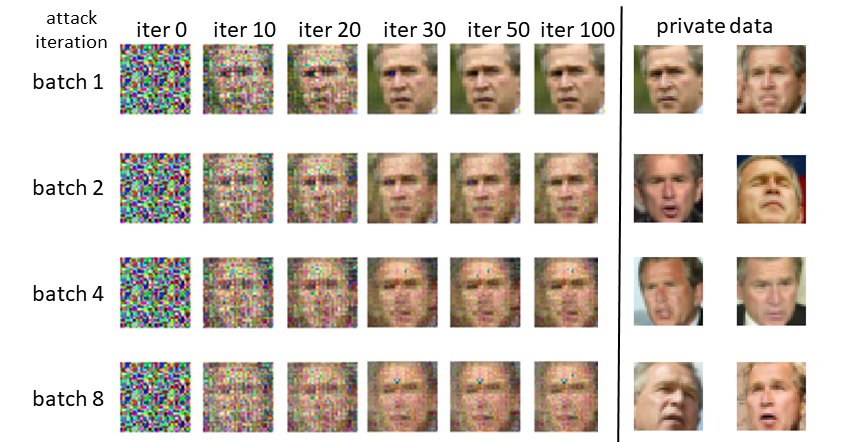}}
  \caption{\small Effect of batch size in CPL-patterned attacks on LFW}
  \label{fig:batchsize}
\vspace{-0.4cm}
\end{figure}

\vspace{-0.8cm}

\subsubsection{Training Data Resolution:} In contrast to the early work~\cite{zhu2019deep} that fails to attack images of resolution higher than $64\times 64$, we argue that the effectiveness of the CPL attack is mainly attributed to the model overfitting to the training data. In order to handle higher resolution training data, we double the number of filters in all convolutional layers to build a more overfitted model. Figure~\ref{fig:scaling} shows the scaling results of CPL attack on the LFW dataset with input data size of $32\times32$,$64\times 64$, and $128\times 128$. CPL-random requires a much larger number of attack iterations in order to succeed the attack with high reconstruction performance. CPL-patterned is a significantly more effective attack for all three different resolutions with $3$ to $4\times$ reduction in the attack iterations compared to CPL-random. We also provide an example of attacking the $512\times 512$ Indiana University Chest X-Rays image of very high resolution in Figure~\ref{fig:x_ray}.

\vspace{-0.6cm}

\noindent
\begin{figure}[ht]
\begin{minipage}{0.49\linewidth}
 \centerline{\includegraphics[scale=.28]{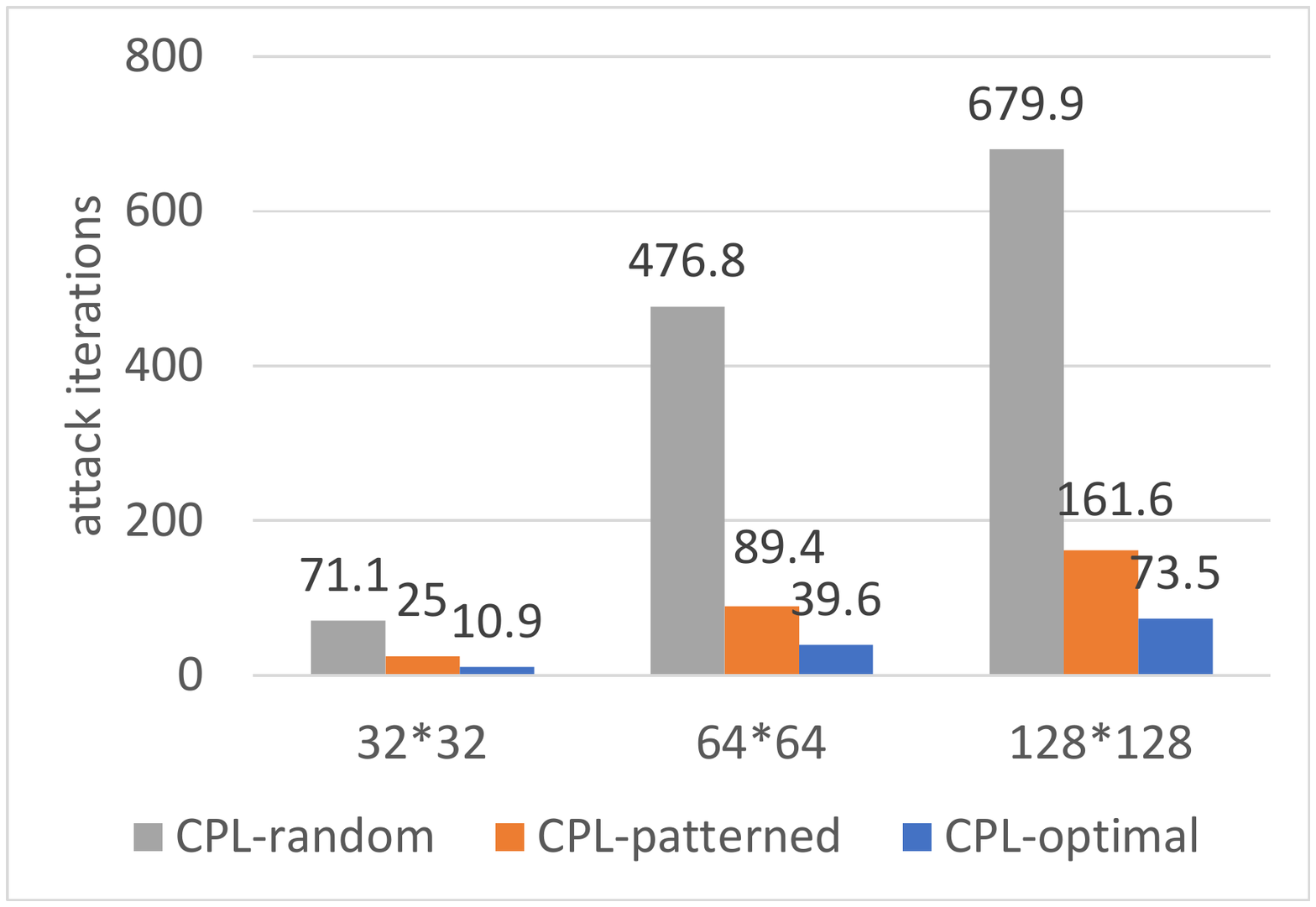}}
 %\caption{\small Attack iteration of the training data scaling using LFW dataset}
 \caption{{\small Effect of  data scaling (LFW)}}
\label{fig:scaling}
\vspace{-0.4cm}
\end{minipage}
\begin{minipage}{0.49\linewidth}
\vspace{0.3cm}
  \centerline{\includegraphics[scale=.61]{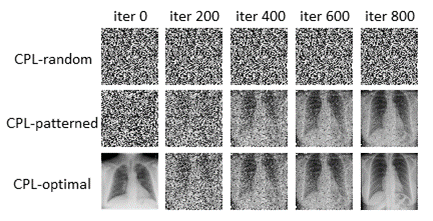}}
 %\vspace{0.3cm}
 %\caption{\small Extreme scaling case of attacking $512\times512$ Chest X-ray image}
 \caption{{\small Attacking $512\times512$ X-ray image}}
\label{fig:x_ray}
\end{minipage}
%\caption{\small Effect of different random seed.}
%\label{fig:init_seed}
%\vspace{-0.4cm}
\end{figure}

\vspace{-1.2cm}

\begin{figure}[ht]
\begin{minipage}{0.49\linewidth}
 \centerline{\includegraphics[scale=.30]{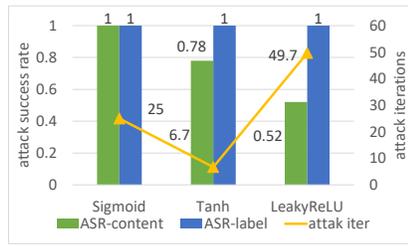}}
 \subcaption{\small LFW}
 \label{fig:activation_lfw}
\end{minipage}
\begin{minipage}{0.49\linewidth}
 \centerline{\includegraphics[scale=.30]{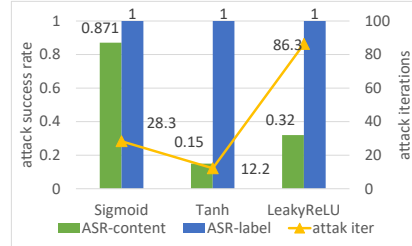}}
  \subcaption{\small CIFAR10}
  \label{fig:activation_cifar10}
\end{minipage}
\begin{minipage}{0.49\linewidth}
 \centerline{\includegraphics[scale=.30]{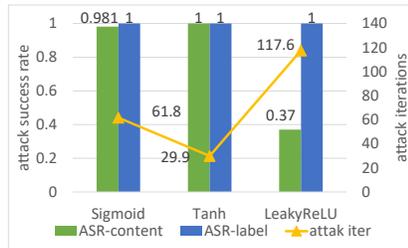}}
 \subcaption{\small CIFAR100}
 \label{fig:activation_cifar100}
\end{minipage}
\begin{minipage}{0.49\linewidth}
 \centerline{\includegraphics[scale=.30]{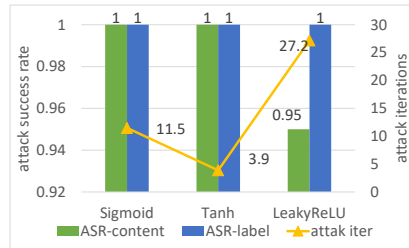}}
  \subcaption{\small MNIST}
  \label{fig:activation_mnist}
\end{minipage}
\caption{\small Effect of activation function on the CPL attack}
\label{fig:activation}
\vspace{-0.4cm}
\end{figure}

\vspace{-0.6cm}

\subsubsection{Activation Function:} The next hyperparameter of FL is the activation function used in model training. We show that the performance of the CPL attacks is highly related to the choice of the activation function. Figure~\ref{fig:activation} compares the attack iterations and attack success rate of CPL-patterned attack with three different activation functions: Sigmoid, Tanh, and LeakReLU.
%We omit the results on MNIST due to space constraint.
We observe that ReLU naturally prevents the full reconstruction of the training data using gradient because the gradient of the negative part of ReLU will be 0, namely,  that part of the trainable parameters will stop responding to variations in error, and will not get adjusted during optimization. This dying ReLU problem takes out the gradient information needed for CPL attacks. In  comparison, both Sigmoid and Tanh are differentiable bijective and can pass the gradient from layer to layer in an almost lossless manner. LeakyReLU sets a slightly inclined line for the negative part of ReLU to mitigate the issue of dying ReLU and thus is vulnerable to CPL attacks.

%\vspace{-0.4cm}

% \vspace{-0.4cm}

Motivated by the impact of activation function, we argue that any model components that discontinue the integrity and uniqueness of gradients can hamper CPL attacks. We observe from our experiments that an added dropout structure enables different gradient in every query, making $\nabla w^{\tau}_{att}(t)$ elusive and unable to converge to the uploaded gradients. By contrast, pooling cannot prevent CPL attacks since pooling layers do not have parameters.

\vspace{-0.6cm}

\subsubsection{Baseline v.s. Communication-efficient Protocols:} We have discussed the communication-efficient parameter update protocol using low-rank filers in Section~2.3. As more FL systems utilize a communication-efficient protocol to replace the baseline protocol, it is important to study the impact of using a communication efficient protocol on the performance of the CPL attacks, especially compared to the baseline client-to-server communication protocol. In this set of experiments, we measure the performance of CPL attacks under varying gradient compression percentage $\theta$, i.e.,  $\theta$ percentage of the gradient update will be discarded in this round of gradient upload. We employ the compression
method in~\cite{lin2017deep} as it provides a good trade-off between communication-efficiency and model training accuracy. It leverages sparse updates and sends only the important gradients, i.e., the gradients whose magnitude larger than a threshold, and further measures are taken to avoid losing information. Locally, the client will accumulate small gradients and only send them when the accumulation is large enough. Figure~\ref{fig:dgc} shows the visualization of the comparison on MNIST and CIFAR10. We observe that compared to baseline protocol with full gradient upload, using the communication efficient protocol with $\theta$ up to 40\%, the CPL attack remains to be effective at the maximum attack iterations of 26 with 100\% attack success rate for CIFAR10.

\vspace{-0.3cm}
\begin{figure}[ht]
\centerline{\includegraphics[scale=.35]{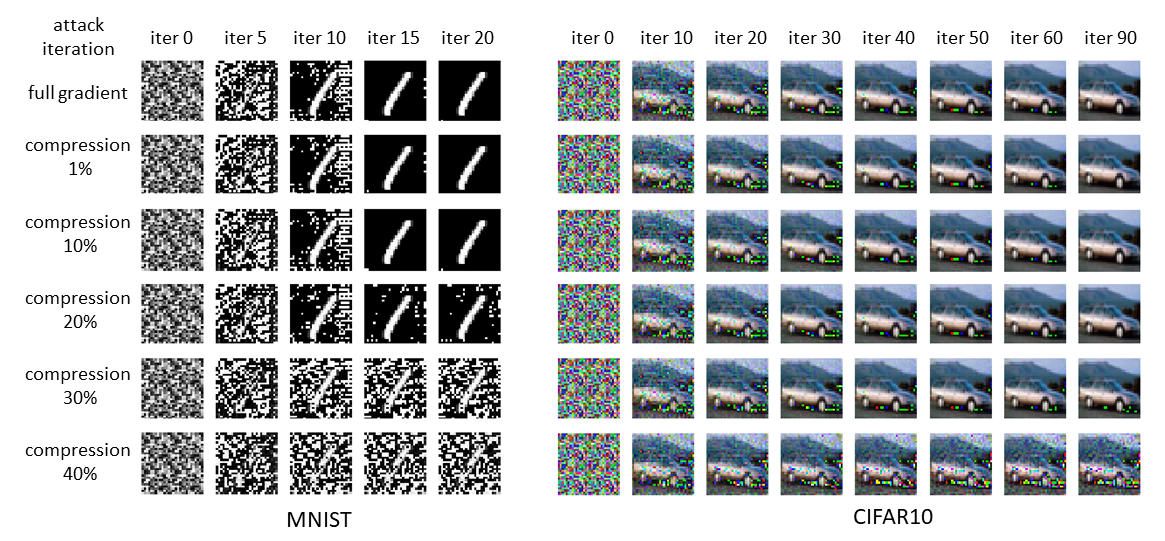}}
\vspace{-0.4cm}
\caption{\small Illustration of the CPL attack under communication-efficient update}
\label{fig:dgc}
\end{figure}

%\vspace{-1.2cm}

\subsection{Attack Effect and Cost Metrics}
\label{sec:3.3}

Our framework evaluates the adverse effect and cost of CPL attacks using the following metrics. For data-specific metrics, we average the evaluation results over all successful reconstructions.

\vspace{-0.4cm}

\subsubsection{Attack success rate (ASR)} is the percentage
of successfully reconstructed training data over the number of training data being attacked. We use ASRc and ASRl to refer to the attack success rate on content and label respectively.

\vspace{-0.4cm}

\subsubsection{MSE} uses the root mean square deviation to measure the similarity between reconstructed input $x_{rec}$ and ground-truth input $x$: $\frac{1}{M}\sum\nolimits_{i = 1}^M {(x(i) - {x_{rec}}(i)} {)^2}$ when the reconstruction is successful. $M$ denotes total number of features in the input. MSE can be used on all data format such as attributes and text. A smaller MSE means the more similar reconstructed data to the private ground truth.

\vspace{-0.4cm}

\subsubsection{SSIM} measures the structural similarity between two images based on a perception-based model~\cite{wang2004image} that considers image degradation as perceived change.
$$SSIM(x,x') = \frac{{(2{\mu _x}{\mu _{x'}} + {c_1})(2{\sigma _{xx'}} + {c_2})}}{{(\mu _x^2 + \mu _{x'}^2 + {c_1})(\sigma _x^2 + \sigma _{x'}^2 + {c_2})}},$$ where $\mu _{x}$ and $\mu_{x'}$ are
the average of $x$ and $x'$, $\sigma _{x}^{2}$ and $\sigma _{x'}^{2}$ are the variance of $x$ and $x'$. $\sigma _{xy}$ is the covariance of $x$ and $x'$. $c_{1}=(k_{1}L)^{2}$ and $c_{2}=(k_{2}L)^{2}$ are two variables to stabilize the division with weak denominator. $L$ is the dynamic range of the pixel-values.
$k_{1}=0.01$ and $k_{2}=0.03$ are constant by default. We use SSIM to evaluate all image datasets. The closer SSIM to 1, the better the attack quality in terms of image reconstruction. SSIM is designed to improve on traditional methods such as MSE on image similarity.

\vspace{-0.4cm}

\subsubsection{Attack iteration} measures the number of attack iterations required for reconstruction learning to converge and thus succeed the attack, e.g., $L_2$ distance of the gradients between the reconstructed data and the local private training data is smaller than a pre-set threshold.

\vspace{-0.4cm}

\section{Experiments and Results}

\label{experiments}

\subsection{Experiment Setup}

We evaluate CPL attacks on four image datasets: MNIST, LFW, CIFAR10, CIFAR100. MNIST consists of 70000 grey-scale hand-written digits images of size $28\times 28$. The 60000:10000 split is used for training and testing data. Labeled Faces in the Wild (LFW) people dataset has 13233 images from 5749 classes. The original image size is $250\times 250$ and we slide it to $32\times 32$ to extract the 'interesting' part. Our experiments only consider 106 classes, each with more than 14 images. For a total number of 3735 eligible LFW data, a 3:1 train-test ratio is applied. CIFAR10 and CIFAR100 each consists of 60000 color images of size $32\times 32$ with 10 classes and 100 classes respectively. The 50000:10000 split is used for training and testing.

We perform CPL attacks with the following attack configurations as the default unless otherwise stated. The initialization method is patterned, the maximum attack iteration is 300, the optimization method is L-BFGS with attack learning rate 1. The attack is performed with full gradient communication.
%%on untrained models and in the setting of full gradient communication for max attack effect.
%%
For each dataset, the attack is performed on 100 images with 10 different random seeds. For MNIST and LFW, we use a LeNet model with 0.9568 benign accuracy on MNIST and 0.695 on LFW. For CIFAR10 and CIFAR100, we apply a ResNet20 with benign accuracy of 0.863 on CIFAR10 and CIFAR100. We use 100 clients as the total client population and at each communication round, 10\% of clients will be selected randomly to participate in the federated learning.

%\vspace{-0.4cm}
\subsection{Gradient Leakage Attack Evaluation}

\label{sec4.2}

\subsubsection{Comparison with other gradient leakage attacks.}
We first conduct a set of experiments to compare the CPL-patterned attack with two existing gradient leakage attacks: the deep gradient attack~\cite{zhu2019deep}, and the gradient inverting attack~\cite{geiping2020inverting}, which replaces the $L_2$ distance function with cosine similarity and performs the optimization on the sign of the gradient. We first measure the attack results on the four benchmark image datasets. Table~\ref{table:base_evaluation} shows that CPL is a much faster and more efficient attack with the highest attack success rate (ASR) and lowest attack iterations on both content and label reconstruction for all four datasets. Also, the very high SSIM and low MSE for CPL indicate the quality of the reconstructed data is almost identical to the private training data. We also observe that gradient inverting attack~\cite{geiping2020inverting} can lead to high ASR compared to deep gradient attack~\cite{zhu2019deep} using $L_2$ distance but at a great cost of attack iterations. Note that CPL using $L_2$ distance offers slightly higher ASR compared to~\cite{geiping2020inverting} but at much lower attack cost in terms of attack iterations required.

\vspace{-0.5cm}

\begin{table}
\centering
\scalebox{0.75}{
\begin{tabular}{|c|c|c|c|c|c|c|c|c|c|c|c|c|}
\hline
\multirow{2}{*}{} & \multicolumn{3}{c|}{CIFAR10}                              & \multicolumn{3}{c|}{CIFAR100}                             & \multicolumn{3}{c|}{LFW}                        & \multicolumn{3}{c|}{MNIST}                                \\ \cline{2-13}
& CPL & \cite{zhu2019deep}   & \cite{geiping2020inverting} & CPL & \cite{zhu2019deep}   & \cite{geiping2020inverting} & CPL & \cite{zhu2019deep}   & \cite{geiping2020inverting} & CPL & \cite{zhu2019deep}   & \cite{geiping2020inverting} \\ \hline
attack iter & \textbf{28.3} & 114.5 & 6725 & \textbf{61.8} & 125 & 6813 & \textbf{25}  & 69.2 & 4527 & \textbf{11.5} & 18.4  & 3265 \\ \hline
ASRc & \textbf{0.973}    & 0.754    & 0.958                               & \textbf{0.981}    & 0.85     & 0.978                               & \textbf{1}        & 0.857    & 0.974                               & \textbf{1}        & 0.686    & 0.784                               \\ \hline
ASRl            & \textbf{1}        & 0.965    & \textbf{1}                                   &  \textbf{1}        & 0.94     & \textbf{1}                                   & \textbf{1}        & 0.951    & \textbf{1}                                   & \textbf{1}        & 0.951    & \textbf{1}                                   \\ \hline
SSIM              &   \textbf{ 0.9985}   & 0.9982   & 0.9984                              & \textbf{0.959}    & 0.953    & 0.958                               & \textbf{0.998}    & 0.997    & 0.9978                              & \textbf{0.99}     & 0.985    & 0.989                               \\ \hline
MSE                  & \textbf{2.2E-04} & 2.5E-04 & \textbf{2.2E-04}                            & \textbf{5.4E-04} & 6.5E-04 & \textbf{5.4E-04}                            & \textbf{2.2E-04} & 2.9E-04 & 2.3E-04                            & \textbf{1.5E-05} & 1.7E-05 & 1.6E-05                            \\ \hline
\end{tabular}
}
\caption{\small Comparison of Different Gradient Leakage Attacks}
\label{table:base_evaluation}
\end{table}

\vspace{-1.0cm}

%\vspace{-0.8cm}

In addition, we include two attribute datasets: UCI Adult Income and Breast Cancer Wisconsin in our comparison experiments. UCI Adult dataset includes 48842 records with 14 attributes such as age, gender, education,
marital status, occupation, working hours, and native country.
The  binary classification task is to predict if a person makes
over \$50K a year based on the census attributes. Breast Cancer Wisconsin has 569 records with 32 attributes that are computed from a digitized image of a fine needle aspirate (FNA) of a breast mass. These features describe characteristics of the cell nuclei present in the image. The task is to identify if the record indicates benign or malignant cancer. For two attributes dataset, a multi-layer-perceptron with one hidden layer is used and has an accuracy of 0.8646 on UCI Adult and 0.986 on Breast Cancer Wisconsin. Categorical features in the two attribute datasets are processed with one-hot encoding. Table~\ref{table:base_attribute} shows the results. For the breast cancer dataset, similar observation is found compared to Table~\ref{table:base_evaluation}. CPL is the most effective attack with the highest ASR, and the gradient inverting attack~\cite{geiping2020inverting} is the second with ASRc of 78\% and ASRl of 94\%, compared to the deep gradient attack~\cite{zhu2019deep} with ASRc of 35\% and ASRl of 56\%. For the UCI Adult dataset, all three have good and similar attack performance.

\vspace{-0.5cm}
\begin{table}
\centering
\scalebox{0.80}{
\begin{tabular}{|c|c|c|c|c|c|c|c|c|}
\cline{1-4} \cline{6-9}
UCI Adult & CPL      & \cite{zhu2019deep}  & \cite{geiping2020inverting} & \quad\quad\quad & Breast Cancer & CPL      & \cite{zhu2019deep}  & \cite{geiping2020inverting}  \\ \cline{1-4} \cline{6-9}
ASRc      & 1        & 0.99        &1     & \quad\quad\quad & ASRc          & 1        & 0.35     & 0.78     \\ \cline{1-4} \cline{6-9}
ASRl      & 1        & 1        & 1       & \quad\quad\quad & ASRl          & 1        & 0.56     & 0.94     \\ \cline{1-4} \cline{6-9}
MSE       & 1.82E-04 & 3.23E-04 & 4.97E-03 & \quad\quad\quad & MSE           & 4.61E-04 & 6.15E-04 & 6.29E-04 \\ \cline{1-4} \cline{6-9}
\end{tabular}
}
\caption{\small Comparison of gradient leakage attacks on two attribute datasets}
\label{table:base_attribute}
\end{table}

\vspace{-1.6cm}

\subsubsection{Comparison with other training data inference attack.}
This set of experiments compares client privacy leakage attacks with two existing training data inference attacks: Melis et al~\cite{melis2019exploiting} is based on membership-attack~\cite{shokri2017membership,truex2018towards} and can make partial property inference on a carefully selected subset of data. Aono et al~\cite{aono2017privacy} is an attack that only works on the first layer of a multi-layer perceptron and can synthesize data that is proportional to the private training data. Figure~\ref{fig:compare} shows the comparison result.As the gap of the MSE magnitude is too large, the results are plotted in the log scale. It shows that the adverse effect of the CPL attack is the most detrimental with the hightest SSIM and lowest MSE, followed by Aono et al as the second, and Melis et al has much smaller SSIM and much larger MSE, indicating the low attack inference quality.
%%
%% Since synthesized data record proportional to the attributes has (almost) no utility, we will not include the two attacks in attributes dataset.

\vspace{-0.5cm}
\begin{figure}[ht]
\begin{minipage}{0.49\linewidth}
 \centerline{\includegraphics[scale=.32]{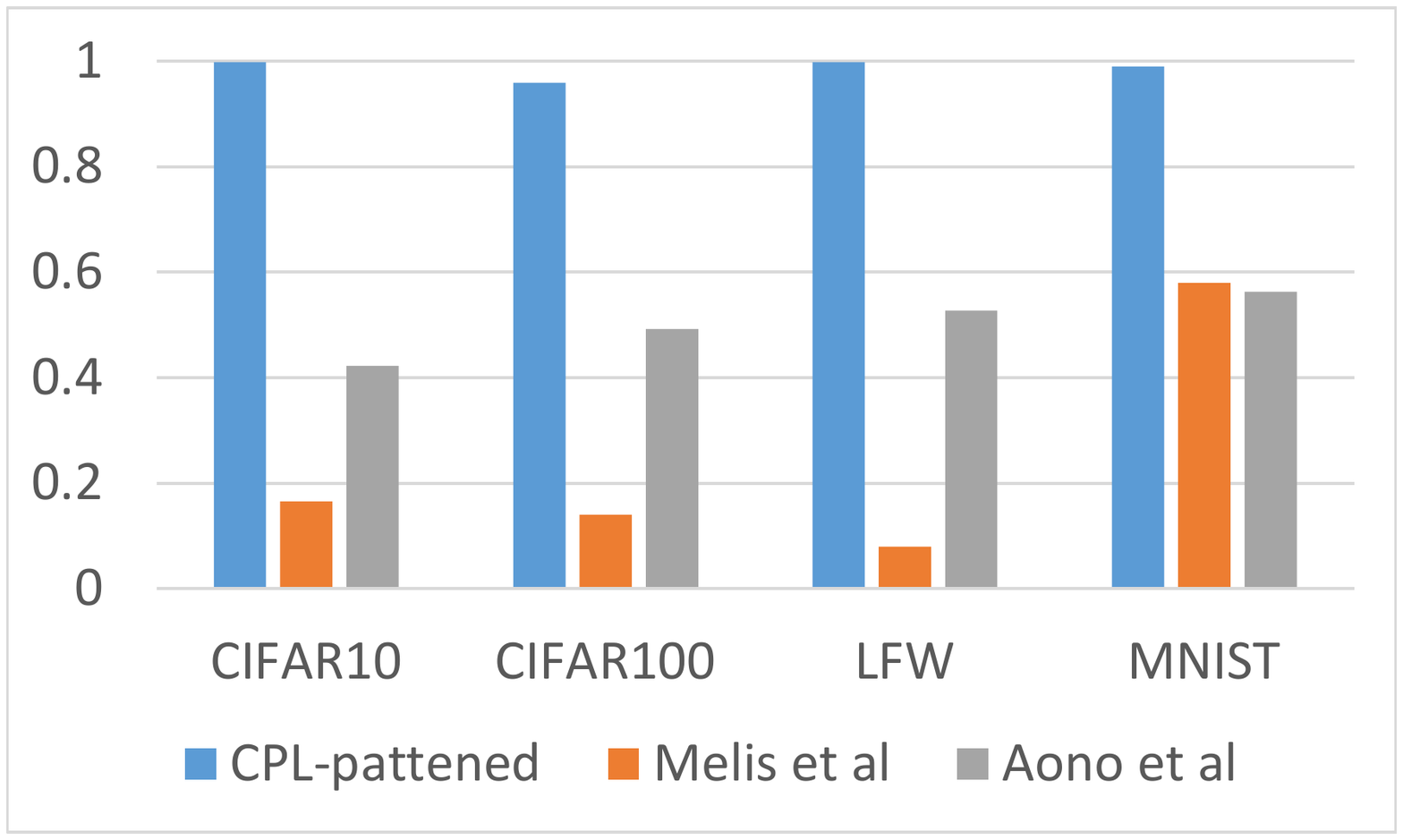}}
 \subcaption{\small SSIM}
 \label{fig:compare_ssim}
\end{minipage}
\begin{minipage}{0.49\linewidth}
 \centerline{\includegraphics[scale=.23]{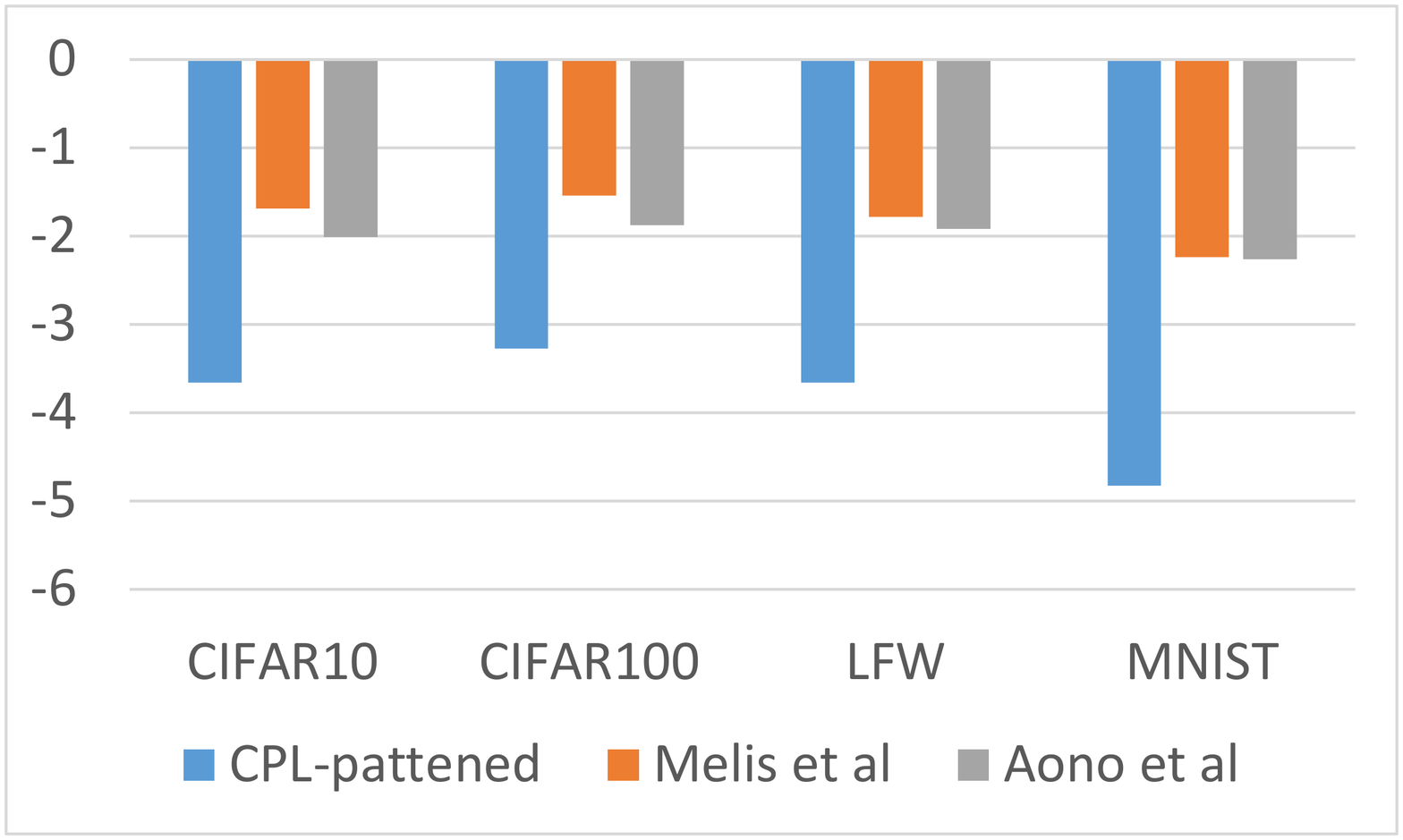}}
  \subcaption{\small MSE in log scale}
  \label{fig:compare_mse}
\end{minipage}
\caption{\small Comparison of CPL attack with Melis~\cite{melis2019exploiting} and Aono~\cite{aono2017privacy}}
\label{fig:compare}
\vspace{-0.4cm}
\end{figure}

\vspace{-0.8cm}

\subsubsection{Variation Study: Geometrical Initialization.}
This set of experiments measure and compare the four types of geometrical initialization methods: patterned random, dark/light, RGB, and optimal. For optimal initialization, we feed a piece of data that is randomly picked from the training set. This assumption is reasonable when different clients hold part of the information about one data item. Table~\ref{table:geometrical} shows the result.
We observe that the performance of all four geometrical initializations is always better than the random initialization. Note that the optimal initialization is used in this experiment as a theoretical optimal reference point as it assumes the background information about the data distribution. Furthermore, the performance of geometrical initializations  is also dataset-dependent. CPL attack on CIFAR100 requires a longer time and more iterations to succeed than CPL on CIFAR10 and LFW.

%%While we leverage the aggressive steps of the labels in gradient and play the trick in paper~\cite{zhao2020idlg}, we can achieve 100\% attack success rate on the label. Therefore, for the rest of the experiment, we only need to measure the attack success rate on the content and will use ASRc only for the rest of the paper.

\vspace{-0.4cm}

\begin{table}[ht]
\centering
\scalebox{0.8}{
\small{
\begin{tabular}{|c|c|C{1.2cm}|C{0.9cm}|C{0.9cm}|C{0.9cm}|C{0.9cm}|C{0.9cm}|C{0.9cm}|C{0.9cm}|C{1.1cm}|}
\hline
\multicolumn{2}{|c|}{\multirow{2}{*}{\diagbox{dataset}{initialization}}} & baseline  & \multicolumn{2}{c|}{patterned} & \multicolumn{2}{c|}{dark/light} & \multicolumn{3}{c|}{RGB} & optimal \\ \cline{3-11}
\multicolumn{2}{|c|}{}                                & random & 2*2            & 4*4           & dark           & light          & R      & G      & B      & insider \\ \hline
\multirow{2}{*}{CIFAR10}         & attack iter       & 91.14  & 28.3           & \textbf{24.8}          & 34             & 52.3           & 35.9   & 77.5   & 79.1   & 23.2    \\ \cline{2-11}
                                  & ASR              & 0.871  & 0.973          & \textbf{0.976}         & 0.99           & 1              & 0.99   & 0.96   & 0.96   & 1       \\ \hline
\multirow{2}{*}{CIFAR100}        & attack iter       & 125    & 61.8           & 57.2          & \textbf{57.5}           & 65.3           & 59.4   & 61.3   & 62.4   & 35.3    \\ \cline{2-11}
                                  & ASR              & 1      & 0.981          & 0.995         & \textbf{1}              & 1              & 1      & 0.88   & 1      & 1       \\ \hline
\multirow{2}{*}{LFW}              & attack iter       & 71.1   & 25             & 18.6          & 34             & 50.8           & \textbf{20.3}   & 28     & 42.1   & 13.3    \\ \cline{2-11}
                                  & ASR              & 0.86   & 1              & 0.997         & 1              & 1              & \textbf{1}      & 1      & 1      & 1       \\ \hline
\end{tabular}
}}
\caption{\small Comparison of different geometrical initialization in CPL attacks}
\label{table:geometrical}
\vspace{-0.4cm}
\end{table}

\vspace{-0.4cm}

\noindent
\begin{minipage}{\textwidth}
  \begin{minipage}{0.54\linewidth}
 \centerline{\includegraphics[scale=.25]{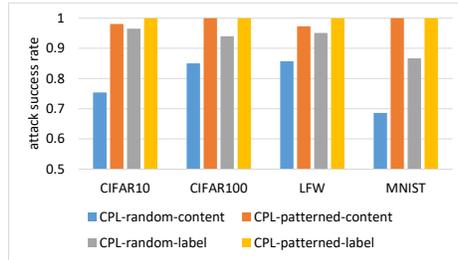}}
  \captionof{figure}{\small Effect of Attack Initialization}
  \label{fig:init_compare}
\end{minipage}
\begin{minipage}{0.45\linewidth}
\scalebox{0.70}{
\begin{tabular}{|c|c|c|c|c|c|}
\hline
\multicolumn{2}{|c|}{attack iter variance} & avg   & min & max & variance \\ \hline
\multirow{2}{*}{MNIST}     & CPL-patterned & 11.5  & 7   & 19  & 3.844    \\ \cline{2-6}
                           & CPL-random     & 18.4  & 13  & 65  & 12.27    \\ \hline
\multirow{2}{*}{LFW}       & CPL-patterned & 25    & 18  & 125 & 14.32    \\ \cline{2-6}
                           & CPL-random    & 69.2  & 53  & 186 & 25.87    \\ \hline
\multirow{2}{*}{CIFAR10}   & CPL-patterned & 28.3  & 17  & 86  & 18.11    \\ \cline{2-6}
                           & CPL-random    & 114.5 & 67  & 272 & 39.31    \\ \hline
\multirow{2}{*}{CIFAR100}  & CPL-patterned & 61.8  & 24  & 133 & 25.3     \\ \cline{2-6}
                           & CPL-random     & 125   & 70  & 240 & 41.2     \\ \hline
\end{tabular}
}
 \captionof{table}{{\footnotesize Variance of attack iterations}}
\label{table:att_variance}
\end{minipage}
%\caption{Effect of Initialization in attack iterations and attack success rate}
%\label{figure:init_seed}
\vspace{-0.4cm}
\end{minipage}

\begin{table}[ht]
%\hspace{0.5cm}
  %\hfill
\begin{minipage}{0.49\linewidth}
\scalebox{0.80}{
\begin{tabular}{|c|c|c|c|c|c|}
\hline
\begin{tabular}[c]{@{}c@{}}batch   size \end{tabular} & 1       & 2       & 4       & 8     & 16    \\ \hline
ASR                                           & 1       & 0.96    & 0.89    & 0.76  & 0.13  \\ \hline
attack iter                                            & 25      & 25.7    & 25.6    & 26.1  & 25.7  \\ \hline
SSIM                                                   & 0.998   & 0.635   & 0.525   & 0.456 & 0.401 \\ \hline
MSE                                                    & 2.2E-04 & 6.7E-03 & 8.0E-03 & 9.0E-03 & 1.0E-02  \\ \hline
\end{tabular}
}
 \subcaption{\small Effect of Batch size on CPL}
 \label{table:batchsize}
\end{minipage}
\hspace{0.2cm}
\begin{minipage}{0.49\linewidth}
\scalebox{0.80}{
\begin{tabular}{|c|c|c|c|c|c|}
\hline
local iter  & 1       & 3       & 5       & 7       & 9       \\ \hline
attack iter & 25      & 42.5    & 94.2    & 95.6    & 97.9    \\ \hline
ASR         & 1       & 1       & 0.97    & 0.85    & 0.39    \\ \hline
SSIM        & 0.998   & 0.981   & 0.898   & 0.659   & 0.471   \\ \hline
MSE         & 2.2E-04 & 6.8E-04 & 1.8E-03 & 3.8E-03 & 5.5E-03 \\ \hline
\end{tabular}
}
\subcaption{\small Effect of local training iterations}
\label{table:local_iter}
\end{minipage}
\caption{Effect of local training hyperparameters on CPL attack (LFW)}
\label{table:vulnerability}
\vspace{-0.4cm}
\end{table}

\vspace{-0.4cm}

Figure~\ref{fig:init_compare} compares the four geometric initialization methods with respect to the effectiveness of CPL attack on four benchmark datasets. Table~\ref{table:att_variance} measures the number of iterations to succeed the CPL attack in terms of average, min, max, and variance for each of the four datasets. We observe from Figure~\ref{fig:init_compare} that the geometrical initialization can largely increase the attack success rate in addition to faster attack convergence. Furthermore, the attack success rate on the label reconstruction is consistently higher than the attack success rate on content reconstruction, which further confirms the observation in Figure~\ref{fig:init_seed} (last row for LFW and CIFAR100). It indicates that separating the content reconstruction attack and label reconstruction attack during the CPL attack iterations, as done in CPL (Algorithm~1), is another factor contributing to high success rate and low cost (fewer attack iterations) compared to optimizing the content attack and label attack simultaneously~\cite{zhu2019deep} (recall Table~\ref{table:base_evaluation} and Table~\ref{table:base_attribute}).

\subsubsection{Variation Study: batch size and iterations.}
Motivated by the batch size visualization in~Figure~\ref{fig:batchsize}, we study the impact of hyperparameters used in local training, such as batch size and the number of local training iterations, on the performance of the CPL attack. Table~\ref{table:batchsize} shows the results of the CPL attack on the LFW dataset with five different batch sizes. We observe that the ASR of CPL attack is decreased to 96\%, 89\%, 76\%, and 13\% as the batch size increases to 2, 4, 8, and 16. The CPL attacks at different batch sizes are successful at the attack iterations around 25 to 26 as we measure attack iterations only on successfully reconstructed instances. Table~\ref{table:local_iter} shows the results of the CPL attack under five different settings of local iterations before sharing the gradient updates. We show that as more iterations are performed at local training before sharing the gradient update, the ASR of the CPL attack is decreasing with 97\%, 85\%, and 39\% for iterations of 5, 7, and 9 respectively. This inspires us to add hyperparameter optimization as one of the mitigation strategies for evaluation in Section~\ref{sec4.3}.

\vspace{-0.4cm}

\begin{table}[ht]
\begin{minipage}{0.99\linewidth}
\centering
\scalebox{0.80}{
\small{
\begin{tabular}{cccccccccccc}
\hline
\multicolumn{2}{|c|}{benign acc} & \multicolumn{1}{c|}{0}       & \multicolumn{1}{c|}{1\%}      & \multicolumn{1}{c|}{10\%}     & \multicolumn{1}{c|}{20\%}     & \multicolumn{1}{c|}{30\%}     & \multicolumn{1}{c|}{40\%}     & \multicolumn{1}{c|}{50\%}     & \multicolumn{1}{c|}{70\%}     & \multicolumn{1}{c|}{80\%}     & \multicolumn{1}{c|}{90\%}     \\ \hline
  \multicolumn{2}{|c|}{LFW}         & \multicolumn{1}{c|}{0.695}   & \multicolumn{1}{c|}{0.697}   & \multicolumn{1}{c|}{0.705}   & \multicolumn{1}{c|}{0.701}   & \multicolumn{1}{c|}{0.71}    & \multicolumn{1}{c|}{0.709}   & \multicolumn{1}{c|}{0.713}   & \multicolumn{1}{c|}{0.711}   & \multicolumn{1}{c|}{0.683}   & \multicolumn{1}{c|}{0.676}   \\ \hline
 \multicolumn{2}{|c|}{CIFAR100}    & \multicolumn{1}{c|}{0.67}    & \multicolumn{1}{c|}{0.673}   & \multicolumn{1}{c|}{0.679}   & \multicolumn{1}{c|}{0.685}   & \multicolumn{1}{c|}{0.687}   & \multicolumn{1}{c|}{0.695}   & \multicolumn{1}{c|}{0.689}   & \multicolumn{1}{c|}{0.694}   & \multicolumn{1}{c|}{0.676}   & \multicolumn{1}{c|}{0.668}   \\ \hline
 \multicolumn{2}{|c|}{CIFAR10}     & \multicolumn{1}{c|}{0.863}   & \multicolumn{1}{c|}{0.864}   & \multicolumn{1}{c|}{0.867}   & \multicolumn{1}{c|}{0.872}   & \multicolumn{1}{c|}{0.868}   & \multicolumn{1}{c|}{0.865}   & \multicolumn{1}{c|}{0.868}   & \multicolumn{1}{c|}{0.861}   & \multicolumn{1}{c|}{0.864}   & \multicolumn{1}{c|}{0.859}   \\ \hline
\multicolumn{2}{|c|}{MNIST}       & \multicolumn{1}{c|}{0.9568}  & \multicolumn{1}{c|}{0.9567}  & \multicolumn{1}{c|}{0.9577}  & \multicolumn{1}{c|}{0.957}   & \multicolumn{1}{c|}{0.9571}  & \multicolumn{1}{c|}{0.9575}  & \multicolumn{1}{c|}{0.9572}  & \multicolumn{1}{c|}{0.9576}  & \multicolumn{1}{c|}{0.9573}  & \multicolumn{1}{c|}{0.9556}  \\ \hline
\end{tabular}
}}
\subcaption{\small Benign accuracy of four datasets with varying compression rates}
\label{table:dgc_acc}
\end{minipage}
\hspace{0.5cm}
  %\hfill
\begin{minipage}{0.99\linewidth}
\centering
\scalebox{0.80}{
\small{
\begin{tabular}{cccccccccccc}
\hline
\multicolumn{1}{|c|}{\multirow{5}{*}{\rotatebox{90}{LFW}}}      & \multicolumn{1}{c|}{compression}            & \multicolumn{1}{c|}{original} & \multicolumn{1}{c|}{1\%}      & \multicolumn{1}{c|}{10\%}     & \multicolumn{1}{c|}{20\%}     & \multicolumn{1}{c|}{30\%}     & \multicolumn{1}{c|}{40\%}     & \multicolumn{1}{c|}{50\%}     & \multicolumn{1}{c|}{70\%}     & \multicolumn{1}{c|}{80\%}     & \multicolumn{1}{c|}{90\%}     \\ \cline{2-12}
\multicolumn{1}{|c|}{}                          & \multicolumn{1}{c|}{attack iter} & \multicolumn{1}{c|}{25}       & \multicolumn{1}{c|}{25}      & \multicolumn{1}{c|}{24.9}    & \multicolumn{1}{c|}{24.9}    & \multicolumn{1}{c|}{25}      & \multicolumn{1}{c|}{24.8}    & \multicolumn{1}{c|}{25}      & \multicolumn{1}{c|}{24.6}    & \multicolumn{1}{c|}{24.5}    & \multicolumn{1}{c|}{300}     \\ \cline{2-12}
\multicolumn{1}{|c|}{}                          & \multicolumn{1}{c|}{ASR}         & \multicolumn{1}{c|}{1}        & \multicolumn{1}{c|}{1}       & \multicolumn{1}{c|}{1}       & \multicolumn{1}{c|}{1}       & \multicolumn{1}{c|}{1}       & \multicolumn{1}{c|}{1}       & \multicolumn{1}{c|}{1}       & \multicolumn{1}{c|}{1}       & \multicolumn{1}{c|}{1}       & \multicolumn{1}{c|}{0}       \\ \cline{2-12}
\multicolumn{1}{|c|}{}                          & \multicolumn{1}{c|}{SSIM}        & \multicolumn{1}{c|}{0.998}    & \multicolumn{1}{c|}{0.9996}  & \multicolumn{1}{c|}{0.9997}  & \multicolumn{1}{c|}{0.9978}  & \multicolumn{1}{c|}{0.9978}  & \multicolumn{1}{c|}{0.9975}  & \multicolumn{1}{c|}{0.998}   & \multicolumn{1}{c|}{0.9981}  & \multicolumn{1}{c|}{0.951}   & \multicolumn{1}{c|}{0.004}   \\ \cline{2-12}
\multicolumn{1}{|c|}{}                          & \multicolumn{1}{c|}{MSE}         & \multicolumn{1}{c|}{2.2E-04}  & \multicolumn{1}{c|}{1.8E-04} & \multicolumn{1}{c|}{1.7E-04} & \multicolumn{1}{c|}{4.9E-04} & \multicolumn{1}{c|}{4.8E-04} & \multicolumn{1}{c|}{5.1E-04} & \multicolumn{1}{c|}{4.5E-04} & \multicolumn{1}{c|}{4.6E-04} & \multicolumn{1}{c|}{1.6E-03} & \multicolumn{1}{c|}{1.6E-01} \\ \hline
\multicolumn{1}{l}{}                            & \multicolumn{1}{l}{}             & \multicolumn{1}{l}{}          & \multicolumn{1}{l}{}         & \multicolumn{1}{l}{}         & \multicolumn{1}{l}{}         & \multicolumn{1}{l}{}         & \multicolumn{1}{l}{}         & \multicolumn{1}{l}{}         & \multicolumn{1}{l}{}         & \multicolumn{1}{l}{}         & \multicolumn{1}{l}{}         \\ \hline
\multicolumn{1}{|c|}{\multirow{4}{*}{\rotatebox{90}{CIFAR100}}} & \multicolumn{1}{c|}{attack iter} & \multicolumn{1}{c|}{61.8}     & \multicolumn{1}{c|}{61.8}    & \multicolumn{1}{c|}{61.8}    & \multicolumn{1}{c|}{61.7}    & \multicolumn{1}{c|}{61.7}    & \multicolumn{1}{c|}{61.5}    & \multicolumn{1}{c|}{61.8}    & \multicolumn{1}{c|}{60.1}    & \multicolumn{1}{c|}{59.8}    & \multicolumn{1}{c|}{300}     \\ \cline{2-12}
\multicolumn{1}{|c|}{}                          & \multicolumn{1}{c|}{ASR}         & \multicolumn{1}{c|}{1}        & \multicolumn{1}{c|}{1}       & \multicolumn{1}{c|}{1}       & \multicolumn{1}{c|}{1}       & \multicolumn{1}{c|}{1}       & \multicolumn{1}{c|}{1}       & \multicolumn{1}{c|}{1}       & \multicolumn{1}{c|}{1}       & \multicolumn{1}{c|}{1}       & \multicolumn{1}{c|}{0}       \\ \cline{2-12}
\multicolumn{1}{|c|}{}                          & \multicolumn{1}{c|}{SSIM}        & \multicolumn{1}{c|}{0.959}    & \multicolumn{1}{c|}{0.9994}  & \multicolumn{1}{c|}{0.9981}  & \multicolumn{1}{c|}{0.9981}  & \multicolumn{1}{c|}{0.998}   & \multicolumn{1}{c|}{0.9983}  & \multicolumn{1}{c|}{0.9982}  & \multicolumn{1}{c|}{0.9983}  & \multicolumn{1}{c|}{0.895}   & \multicolumn{1}{c|}{0.016}   \\ \cline{2-12}
\multicolumn{1}{|c|}{}                          & \multicolumn{1}{c|}{MSE}         & \multicolumn{1}{c|}{5.4E-04}  & \multicolumn{1}{c|}{3.3E-04} & \multicolumn{1}{c|}{3.7E-04} & \multicolumn{1}{c|}{3.7E-04} & \multicolumn{1}{c|}{3.8E-04} & \multicolumn{1}{c|}{3.5E-04} & \multicolumn{1}{c|}{3.6E-04} & \multicolumn{1}{c|}{3.7E-04} & \multicolumn{1}{c|}{1.5E-03} & \multicolumn{1}{c|}{1.2E-01} \\ \hline
\multicolumn{1}{l}{}                            & \multicolumn{1}{l}{}             & \multicolumn{1}{l}{}          & \multicolumn{1}{l}{}         & \multicolumn{1}{l}{}         & \multicolumn{1}{l}{}         & \multicolumn{1}{l}{}         & \multicolumn{1}{l}{}         & \multicolumn{1}{l}{}         & \multicolumn{1}{l}{}         & \multicolumn{1}{l}{}         & \multicolumn{1}{l}{}         \\ \hline
\multicolumn{1}{|c|}{\multirow{4}{*}{\rotatebox{90}{CIFAR10}}}  & \multicolumn{1}{c|}{attack iter} & \multicolumn{1}{c|}{28.3}     & \multicolumn{1}{c|}{28.3}    & \multicolumn{1}{c|}{28.1}    & \multicolumn{1}{c|}{26.5}    & \multicolumn{1}{c|}{25.8}    & \multicolumn{1}{c|}{25.3}    & \multicolumn{1}{c|}{300}     & \multicolumn{1}{c|}{300}     & \multicolumn{1}{c|}{300}     & \multicolumn{1}{c|}{300}     \\ \cline{2-12}
\multicolumn{1}{|c|}{}                          & \multicolumn{1}{c|}{ASR}         & \multicolumn{1}{c|}{1}        & \multicolumn{1}{c|}{1}       & \multicolumn{1}{c|}{1}       & \multicolumn{1}{c|}{1}       & \multicolumn{1}{c|}{1}       & \multicolumn{1}{c|}{1}       & \multicolumn{1}{c|}{0}       & \multicolumn{1}{c|}{0}       & \multicolumn{1}{c|}{0}       & \multicolumn{1}{c|}{0}       \\ \cline{2-12}
\multicolumn{1}{|c|}{}                          & \multicolumn{1}{c|}{SSIM}        & \multicolumn{1}{c|}{0.9985}   & \multicolumn{1}{c|}{0.9996}  & \multicolumn{1}{c|}{0.9996}  & \multicolumn{1}{c|}{0.9997}  & \multicolumn{1}{c|}{0.9992}  & \multicolumn{1}{c|}{0.87}    & \multicolumn{1}{c|}{0.523}   & \multicolumn{1}{c|}{0.0017}  & \multicolumn{1}{c|}{0.0019}  & \multicolumn{1}{c|}{0.0018}  \\ \cline{2-12}
\multicolumn{1}{|c|}{}                          & \multicolumn{1}{c|}{MSE}         & \multicolumn{1}{c|}{2.2E-04}  & \multicolumn{1}{c|}{1.3E-04} & \multicolumn{1}{c|}{1.2E-04} & \multicolumn{1}{c|}{1.2E-04} & \multicolumn{1}{c|}{2.1E-04} & \multicolumn{1}{c|}{3.1E-03} & \multicolumn{1}{c|}{9.6E-03} & \multicolumn{1}{c|}{3.3E-01} & \multicolumn{1}{c|}{3.3E-01} & \multicolumn{1}{c|}{3.3E-01} \\ \hline
\multicolumn{1}{l}{}                            & \multicolumn{1}{l}{}             & \multicolumn{1}{l}{}          & \multicolumn{1}{l}{}         & \multicolumn{1}{l}{}         & \multicolumn{1}{l}{}         & \multicolumn{1}{l}{}         & \multicolumn{1}{l}{}         & \multicolumn{1}{l}{}         & \multicolumn{1}{l}{}         & \multicolumn{1}{l}{}         & \multicolumn{1}{l}{}         \\ \hline
\multicolumn{1}{|c|}{\multirow{4}{*}{\rotatebox{90}{MNIST}}}    & \multicolumn{1}{c|}{attack iter} & \multicolumn{1}{c|}{11.5}     & \multicolumn{1}{c|}{11.5}    & \multicolumn{1}{c|}{11.2}    & \multicolumn{1}{c|}{10.7}    & \multicolumn{1}{c|}{7.2}     & \multicolumn{1}{c|}{300}     & \multicolumn{1}{c|}{300}     & \multicolumn{1}{c|}{300}     & \multicolumn{1}{c|}{300}     & \multicolumn{1}{c|}{300}     \\ \cline{2-12}
\multicolumn{1}{|c|}{}                          & \multicolumn{1}{c|}{ASR}         & \multicolumn{1}{c|}{1}        & \multicolumn{1}{c|}{1}       & \multicolumn{1}{c|}{1}       & \multicolumn{1}{c|}{1}       & \multicolumn{1}{c|}{1}       & \multicolumn{1}{c|}{0}       & \multicolumn{1}{c|}{0}       & \multicolumn{1}{c|}{0}       & \multicolumn{1}{c|}{0}       & \multicolumn{1}{c|}{0}       \\ \cline{2-12}
\multicolumn{1}{|c|}{}                          & \multicolumn{1}{c|}{SSIM}        & \multicolumn{1}{c|}{0.99}     & \multicolumn{1}{c|}{0.9899}  & \multicolumn{1}{c|}{0.9891}  & \multicolumn{1}{c|}{0.9563}  & \multicolumn{1}{c|}{0.9289}  & \multicolumn{1}{c|}{0.8889}  & \multicolumn{1}{c|}{0.8137}  & \multicolumn{1}{c|}{0.425}   & \multicolumn{1}{c|}{0.433}   & \multicolumn{1}{c|}{0.43}    \\ \cline{2-12}
\multicolumn{1}{|c|}{}                          & \multicolumn{1}{c|}{MSE}         & \multicolumn{1}{c|}{2.4E-04}  & \multicolumn{1}{c|}{2.4E-04} & \multicolumn{1}{c|}{2.2E-04} & \multicolumn{1}{c|}{1.7E-03} & \multicolumn{1}{c|}{8.8E-03} & \multicolumn{1}{c|}{2.8E-02} & \multicolumn{1}{c|}{5.8E-02} & \multicolumn{1}{c|}{2.7E-01} & \multicolumn{1}{c|}{2.7E-01} & \multicolumn{1}{c|}{2.7E-01} \\ \hline
\end{tabular}
}}
\subcaption{\small Attack performance of four datasets with varying compression rates}
\label{table:dgc_asr}
\end{minipage}
\caption{\small Effect of CPL attack under communication-efficient FL protocols}
\label{table:dgc}
\vspace{-0.4cm}
\end{table}

%\vspace{-0.4cm}

\vspace{-1.1cm}

%\vspace{-0.6cm}

\subsubsection{Variation Study: Leakage in communication efficient sharing.} This set of experiments measures and compares the gradient leakage in CPL under baseline protocol (full gradient sharing) and communication-efficient protocol (significant gradient sharing with low-rank filer). Table~\ref{table:dgc} shows the result. To illustrate the comparison results, we provide the accuracy of the baseline protocol and the communication-efficient protocol of varying compression percentages on all four benchmark datasets in Table~\ref{table:dgc}(a). We make two interesting observations. (1) CPL attack can generate high confidence reconstructions (high ASR, high SSIM, low MSE) for MNIST and CIFAR10 at compression rate 40\%, and for CIFAR100 and LFW at the compression rate of 90\%. Second, as the compression percentage increases, the number of attack iterations to succeed the CPL attack decreases. This is because a larger portion of the gradients are low significance and are set to 0 by compression. When the attack fails, it indicates that the reconstruction cannot be done even with the infinite($\infty$) attack iterations, but we measure SSIM and MSE of the failed attacks at the maximum attack iterations of 300. (2) CPL attacks are more severe with more training labels in the federated learning task. A possible explanation is that informative gradients are more concentrated when there are more classes.

%% Keep this for your NISP paper
%%(3) Although gradient compression can leak private training data, it can also be utilized as a mitigation mechanism to detect and prevent CPL attacks, While gradient of a single client can be largely pruned, we can still maintain a competitive benign accuracy due to the joint training force of different clients. Meanwhile, all four datasets demonstrate that pruning some percent of gradients will lead to better reconstruction quality and can facilitate the training with higher accuracy.

%\vspace{-0.4cm}

\subsection{Mitigation Strategies}
\label{sec4.3}

Motivated by our comprehensive analysis of CPL attacks of different forms, we next evaluate two attack mitigation strategies: gradient perturbation and gradient squeezing.

\noindent \textbf{Gradient Perturbation with Additive Noise.} We consider Gaussian noise and Laplace noise with zero means and different magnitude of variance in this set of experiments. Table~\ref{table:noise} provides the mitigation results on CIFAR100 and LFW. In both cases, the client privacy leakage attack is largely mitigated at some cost of accuracy if we add sufficient Gaussian noise (G-10e-2) or Laplace noise (L-10e-2), which causes small SSIM and large MSE, showing poor quality of reconstruction attack. Visualization of two examples from each dataset is given in Figure~\ref{fig:noise_large}.

\vspace{-0.5cm}

\begin{table}[ht]
\centering
\scalebox{0.8}{
\small{
\begin{tabular}{ccccccccc}
\hline
\multicolumn{1}{|c|}{}               & \multicolumn{4}{c|}{CIFAR100}                                                                                              & \multicolumn{4}{c|}{LFW}                                                                                                   \\ \hline
\multicolumn{1}{|c|}{Gaussian noise} & \multicolumn{1}{c|}{original} & \multicolumn{1}{c|}{G-10e-4} & \multicolumn{1}{c|}{G-10e-3} & \multicolumn{1}{c|}{G-10e-2} & \multicolumn{1}{c|}{original} & \multicolumn{1}{c|}{G-10e-4} & \multicolumn{1}{c|}{G-10e-3} & \multicolumn{1}{c|}{G-10e-2} \\ \hline
\multicolumn{1}{|c|}{benign acc}     & \multicolumn{1}{c|}{0.67}     & \multicolumn{1}{c|}{0.664}   & \multicolumn{1}{c|}{0.647}   & \multicolumn{1}{c|}{0.612}   & \multicolumn{1}{c|}{0.695}    & \multicolumn{1}{c|}{0.692}   & \multicolumn{1}{c|}{0.653}   & \multicolumn{1}{c|}{0.636}   \\ \hline
\multicolumn{1}{|c|}{attack iter}    & \multicolumn{1}{c|}{61.8}     & \multicolumn{1}{c|}{61.8}    & \multicolumn{1}{c|}{61.8}    & \multicolumn{1}{c|}{300}     & \multicolumn{1}{c|}{25}       & \multicolumn{1}{c|}{25}      & \multicolumn{1}{c|}{25}      & \multicolumn{1}{c|}{300}     \\ \hline
\multicolumn{1}{|c|}{ASR}           & \multicolumn{1}{c|}{1}        & \multicolumn{1}{c|}{1}       & \multicolumn{1}{c|}{1}       & \multicolumn{1}{c|}{0}       & \multicolumn{1}{c|}{1}        & \multicolumn{1}{c|}{1}       & \multicolumn{1}{c|}{1}       & \multicolumn{1}{c|}{0}       \\ \hline
\multicolumn{1}{|c|}{SSIM}           & \multicolumn{1}{c|}{0.9995}   & \multicolumn{1}{c|}{0.9976}  & \multicolumn{1}{c|}{0.8612}  & \multicolumn{1}{c|}{0.019}   & \multicolumn{1}{c|}{0.998}    & \multicolumn{1}{c|}{0.9976}  & \multicolumn{1}{c|}{0.8645}  & \multicolumn{1}{c|}{0.013}   \\ \hline
\multicolumn{1}{|c|}{MSE}            & \multicolumn{1}{c|}{5.4E-04}  & \multicolumn{1}{c|}{6.9E-04} & \multicolumn{1}{c|}{4.1E-03} & \multicolumn{1}{c|}{3.0E-01} & \multicolumn{1}{c|}{2.2E-04}  & \multicolumn{1}{c|}{3.7E-04} & \multicolumn{1}{c|}{3.0E-03} & \multicolumn{1}{c|}{1.9E-01} \\ \hline
                                     &                               &                              &                              &                              &                               &                              &                              &                              \\ \hline
\multicolumn{1}{|c|}{Laplace noise} & \multicolumn{1}{c|}{original} & \multicolumn{1}{c|}{L-10e-4} & \multicolumn{1}{c|}{L-10e-3} & \multicolumn{1}{c|}{L-10e-2} & \multicolumn{1}{c|}{original} & \multicolumn{1}{c|}{L-10e-4} & \multicolumn{1}{c|}{L-10e-3} & \multicolumn{1}{c|}{L-10e-2} \\ \hline
\multicolumn{1}{|c|}{benign acc}     & \multicolumn{1}{c|}{0.67}     & \multicolumn{1}{c|}{0.651}   & \multicolumn{1}{c|}{0.609}   & \multicolumn{1}{c|}{0.578}   & \multicolumn{1}{c|}{0.695}    & \multicolumn{1}{c|}{0.683}   & \multicolumn{1}{c|}{0.632}   & \multicolumn{1}{c|}{0.597}   \\ \hline
\multicolumn{1}{|c|}{attack iter}    & \multicolumn{1}{c|}{61.8}     & \multicolumn{1}{c|}{61.8}    & \multicolumn{1}{c|}{61.8}    & \multicolumn{1}{c|}{300}     & \multicolumn{1}{c|}{25}       & \multicolumn{1}{c|}{25}      & \multicolumn{1}{c|}{25}      & \multicolumn{1}{c|}{300}     \\ \hline
\multicolumn{1}{|c|}{ASR}           & \multicolumn{1}{c|}{1}        & \multicolumn{1}{c|}{1}       & \multicolumn{1}{c|}{1}       & \multicolumn{1}{c|}{0}       & \multicolumn{1}{c|}{1}        & \multicolumn{1}{c|}{1}       & \multicolumn{1}{c|}{1}       & \multicolumn{1}{c|}{0}       \\ \hline
\multicolumn{1}{|c|}{SSIM}           & \multicolumn{1}{c|}{0.9995}   & \multicolumn{1}{c|}{0.9956}  & \multicolumn{1}{c|}{0.7309}  & \multicolumn{1}{c|}{0.017}   & \multicolumn{1}{c|}{0.998}    & \multicolumn{1}{c|}{0.9965}  & \multicolumn{1}{c|}{0.803}   & \multicolumn{1}{c|}{0.009}   \\ \hline
\multicolumn{1}{|c|}{MSE}            & \multicolumn{1}{c|}{5.4E-04}  & \multicolumn{1}{c|}{6.4E-04} & \multicolumn{1}{c|}{6.4E-03} & \multicolumn{1}{c|}{3.1E-01} & \multicolumn{1}{c|}{2.2E-04}  & \multicolumn{1}{c|}{4.0E-04} & \multicolumn{1}{c|}{3.9E-03} & \multicolumn{1}{c|}{2.0E-01} \\ \hline
\end{tabular}
}}
\caption{\small Mitigation with Gaussian noise and Laplace noise }
\label{table:noise}
\vspace{-0.4cm}
\end{table}

%\vspace{-1.4cm}

\vspace{-0.6cm}

\begin{figure}[ht]
 \centerline{\includegraphics[scale=.37]{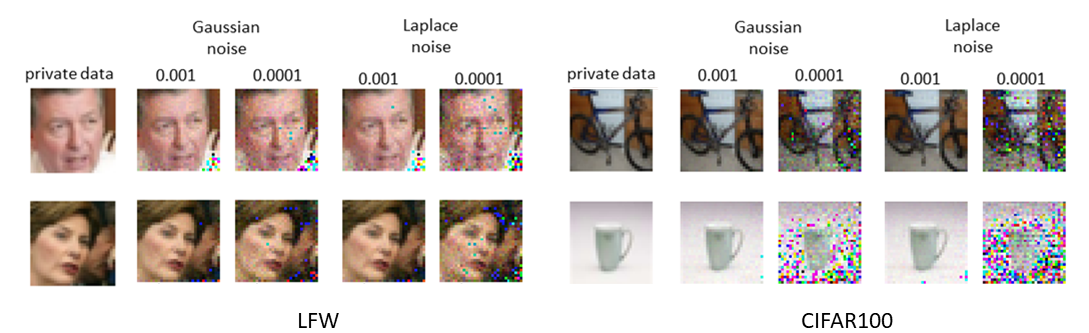}}
  \caption{\small Effect of additive noise on CPL-patterned attacks}
  \label{fig:noise_large}
\vspace{-0.4cm}
\end{figure}

\begin{table}[ht]
\centering
\scalebox{0.8}{
\small{
\begin{tabular}{cccccccccccc}
\hline
\multicolumn{1}{|c|}{\multirow{5}{*}{\rotatebox{90}{CIFAR10}}}  & \multicolumn{1}{c|}{local iter}  & \multicolumn{1}{c|}{1}       & \multicolumn{1}{c|}{2}       & \multicolumn{1}{c|}{3}       & \multicolumn{1}{c|}{4}       & \multicolumn{1}{c|}{5}       & \multicolumn{1}{c|}{6}       & \multicolumn{1}{c|}{7}       & \multicolumn{1}{c|}{8}       & \multicolumn{1}{c|}{9}       & \multicolumn{1}{c|}{10}      \\ \cline{2-12}
\multicolumn{1}{|c|}{}                           & \multicolumn{1}{c|}{ASR}         & \multicolumn{1}{c|}{0.973}   & \multicolumn{1}{c|}{0.971}   & \multicolumn{1}{c|}{0.921}   & \multicolumn{1}{c|}{0.875}   & \multicolumn{1}{c|}{0.835}   & \multicolumn{1}{c|}{0.758}   & \multicolumn{1}{c|}{0.612}   & \multicolumn{1}{c|}{0.5}     & \multicolumn{1}{c|}{0.406}   & \multicolumn{1}{c|}{0.292}   \\ \cline{2-12}
\multicolumn{1}{|c|}{}                           & \multicolumn{1}{c|}{SSIM}        & \multicolumn{1}{c|}{0.9985}  & \multicolumn{1}{c|}{0.9981}  & \multicolumn{1}{c|}{0.997}   & \multicolumn{1}{c|}{0.956}   & \multicolumn{1}{c|}{0.915}   & \multicolumn{1}{c|}{0.901}   & \multicolumn{1}{c|}{0.893}   & \multicolumn{1}{c|}{0.822}   & \multicolumn{1}{c|}{0.748}   & \multicolumn{1}{c|}{0.715}   \\ \cline{2-12}
\multicolumn{1}{|c|}{}                           & \multicolumn{1}{c|}{MSE}         & \multicolumn{1}{c|}{2.2E-04} & \multicolumn{1}{c|}{2.5E-04} & \multicolumn{1}{c|}{2.9E-04} & \multicolumn{1}{c|}{1.1E-03} & \multicolumn{1}{c|}{2.4E-03} & \multicolumn{1}{c|}{2.5E-03} & \multicolumn{1}{c|}{2.7E-03} & \multicolumn{1}{c|}{3.0E-03} & \multicolumn{1}{c|}{4.5E-03} & \multicolumn{1}{c|}{5.0E-03} \\ \cline{2-12}
\multicolumn{1}{|c|}{}                           & \multicolumn{1}{c|}{attack iter} & \multicolumn{1}{c|}{28.3}    & \multicolumn{1}{c|}{29.5}    & \multicolumn{1}{c|}{31.6}    & \multicolumn{1}{c|}{35.2}    & \multicolumn{1}{c|}{42.5}    & \multicolumn{1}{c|}{71.5}    & \multicolumn{1}{c|}{115.3}   & \multicolumn{1}{c|}{116.3}   & \multicolumn{1}{c|}{117.2}   & \multicolumn{1}{c|}{117.5}   \\ \hline
\multicolumn{1}{l}{}                             & \multicolumn{1}{l}{}             & \multicolumn{1}{l}{}         & \multicolumn{1}{l}{}         & \multicolumn{1}{l}{}         & \multicolumn{1}{l}{}         & \multicolumn{1}{l}{}         & \multicolumn{1}{l}{}         & \multicolumn{1}{l}{}         & \multicolumn{1}{l}{}         & \multicolumn{1}{l}{}         & \multicolumn{1}{l}{}         \\ \hline
\multicolumn{1}{|c|}{\multirow{4}{*}{\rotatebox{90}{CIFAR100}}} & \multicolumn{1}{c|}{ASR}         & \multicolumn{1}{c|}{0.981}   & \multicolumn{1}{c|}{0.977}   & \multicolumn{1}{c|}{0.958}   & \multicolumn{1}{c|}{0.949}   & \multicolumn{1}{c|}{0.933}   & \multicolumn{1}{c|}{0.893}   & \multicolumn{1}{c|}{0.842}   & \multicolumn{1}{c|}{0.78}    & \multicolumn{1}{c|}{0.557}   & \multicolumn{1}{c|}{0.437}   \\ \cline{2-12}
\multicolumn{1}{|c|}{}                           & \multicolumn{1}{c|}{SSIM}        & \multicolumn{1}{c|}{0.9959}  & \multicolumn{1}{c|}{0.996}   & \multicolumn{1}{c|}{0.996}   & \multicolumn{1}{c|}{0.959}   & \multicolumn{1}{c|}{0.907}   & \multicolumn{1}{c|}{0.803}   & \multicolumn{1}{c|}{0.771}   & \multicolumn{1}{c|}{0.666}   & \multicolumn{1}{c|}{0.557}   & \multicolumn{1}{c|}{0.505}   \\ \cline{2-12}
\multicolumn{1}{|c|}{}                           & \multicolumn{1}{c|}{MSE}         & \multicolumn{1}{c|}{5.4E-04} & \multicolumn{1}{c|}{5.8E-04} & \multicolumn{1}{c|}{6.9E-04} & \multicolumn{1}{c|}{1.1E-03} & \multicolumn{1}{c|}{1.7E-03} & \multicolumn{1}{c|}{2.2E-03} & \multicolumn{1}{c|}{3.5E-03} & \multicolumn{1}{c|}{4.2E-03} & \multicolumn{1}{c|}{6.4E-03} & \multicolumn{1}{c|}{6.9E-03} \\ \cline{2-12}
\multicolumn{1}{|c|}{}                           & \multicolumn{1}{c|}{attack iter} & \multicolumn{1}{c|}{61.8}    & \multicolumn{1}{c|}{63.8}    & \multicolumn{1}{c|}{66.5}    & \multicolumn{1}{c|}{72.4}    & \multicolumn{1}{c|}{78.3}    & \multicolumn{1}{c|}{95.3}    & \multicolumn{1}{c|}{113.7}   & \multicolumn{1}{c|}{114.1}   & \multicolumn{1}{c|}{114.3}   & \multicolumn{1}{c|}{114.4}   \\ \hline
\multicolumn{1}{l}{}                             & \multicolumn{1}{l}{}             & \multicolumn{1}{l}{}         & \multicolumn{1}{l}{}         & \multicolumn{1}{l}{}         & \multicolumn{1}{l}{}         & \multicolumn{1}{l}{}         & \multicolumn{1}{l}{}         & \multicolumn{1}{l}{}         & \multicolumn{1}{l}{}         & \multicolumn{1}{l}{}         & \multicolumn{1}{l}{}         \\ \hline
\multicolumn{1}{|c|}{\multirow{4}{*}{\rotatebox{90}{LFW}}}       & \multicolumn{1}{c|}{ASR}         & \multicolumn{1}{c|}{1}       & \multicolumn{1}{c|}{1}       & \multicolumn{1}{c|}{1}       & \multicolumn{1}{c|}{1}       & \multicolumn{1}{c|}{0.97}    & \multicolumn{1}{c|}{0.91}    & \multicolumn{1}{c|}{0.85}    & \multicolumn{1}{c|}{0.78}    & \multicolumn{1}{c|}{0.39}    & \multicolumn{1}{c|}{0.07}    \\ \cline{2-12}
\multicolumn{1}{|c|}{}                           & \multicolumn{1}{c|}{SSIM}        & \multicolumn{1}{c|}{0.998}   & \multicolumn{1}{c|}{0.996}   & \multicolumn{1}{c|}{0.981}   & \multicolumn{1}{c|}{0.976}   & \multicolumn{1}{c|}{0.898}   & \multicolumn{1}{c|}{0.811}   & \multicolumn{1}{c|}{0.659}   & \multicolumn{1}{c|}{0.573}   & \multicolumn{1}{c|}{0.471}   & \multicolumn{1}{c|}{0.41}    \\ \cline{2-12}
\multicolumn{1}{|c|}{}                           & \multicolumn{1}{c|}{MSE}         & \multicolumn{1}{c|}{2.2E-04} & \multicolumn{1}{c|}{4.3E-04} & \multicolumn{1}{c|}{6.8E-04} & \multicolumn{1}{c|}{8.6E-04} & \multicolumn{1}{c|}{1.8E-03} & \multicolumn{1}{c|}{2.8E-03} & \multicolumn{1}{c|}{3.8E-03} & \multicolumn{1}{c|}{4.3E-03} & \multicolumn{1}{c|}{5.5E-03} & \multicolumn{1}{c|}{6.5E-03} \\ \cline{2-12}
\multicolumn{1}{|c|}{}                           & \multicolumn{1}{c|}{attack iter} & \multicolumn{1}{c|}{25}      & \multicolumn{1}{c|}{34.7}    & \multicolumn{1}{c|}{42.5}    & \multicolumn{1}{c|}{68.3}    & \multicolumn{1}{c|}{94.2}    & \multicolumn{1}{c|}{95.5}    & \multicolumn{1}{c|}{95.6}    & \multicolumn{1}{c|}{98.3}    & \multicolumn{1}{c|}{97.9}    & \multicolumn{1}{c|}{98.1}    \\ \hline
\multicolumn{1}{l}{}                             & \multicolumn{1}{l}{}             & \multicolumn{1}{l}{}         & \multicolumn{1}{l}{}         & \multicolumn{1}{l}{}         & \multicolumn{1}{l}{}         & \multicolumn{1}{l}{}         & \multicolumn{1}{l}{}         & \multicolumn{1}{l}{}         & \multicolumn{1}{l}{}         & \multicolumn{1}{l}{}         & \multicolumn{1}{l}{}         \\ \hline
\multicolumn{1}{|c|}{\multirow{4}{*}{\rotatebox{90}{MNIST}}}     & \multicolumn{1}{c|}{ASR}         & \multicolumn{1}{c|}{1}       & \multicolumn{1}{c|}{0.82}    & \multicolumn{1}{c|}{0.57}    & \multicolumn{1}{c|}{0.44}    & \multicolumn{1}{c|}{0.25}    & \multicolumn{1}{c|}{0.06}    & \multicolumn{1}{c|}{0}       & \multicolumn{1}{c|}{0}       & \multicolumn{1}{c|}{0}       & \multicolumn{1}{c|}{0}       \\ \cline{2-12}
\multicolumn{1}{|c|}{}                           & \multicolumn{1}{c|}{SSIM}        & \multicolumn{1}{c|}{0.99}    & \multicolumn{1}{c|}{0.982}   & \multicolumn{1}{c|}{0.974}   & \multicolumn{1}{c|}{0.963}   & \multicolumn{1}{c|}{0.954}   & \multicolumn{1}{c|}{0.935}   & \multicolumn{1}{c|}{0.583}   & \multicolumn{1}{c|}{0.576}   & \multicolumn{1}{c|}{0.581}   & \multicolumn{1}{c|}{0.574}   \\ \cline{2-12}
\multicolumn{1}{|c|}{}                           & \multicolumn{1}{c|}{MSE}         & \multicolumn{1}{c|}{1.5E-05} & \multicolumn{1}{c|}{2.3E-04} & \multicolumn{1}{c|}{2.8E-04} & \multicolumn{1}{c|}{1.2E-03} & \multicolumn{1}{c|}{1.5E-03} & \multicolumn{1}{c|}{2.4E-03} & \multicolumn{1}{c|}{1.7E-02} & \multicolumn{1}{c|}{1.7E-02} & \multicolumn{1}{c|}{1.7E-02} & \multicolumn{1}{c|}{1.7E-02} \\ \cline{2-12}
\multicolumn{1}{|c|}{}                           & \multicolumn{1}{c|}{attack iter} & \multicolumn{1}{c|}{11.5}    & \multicolumn{1}{c|}{34.7}    & \multicolumn{1}{c|}{93.2}    & \multicolumn{1}{c|}{96.7}    & \multicolumn{1}{c|}{97.1}    & \multicolumn{1}{c|}{96.5}    & \multicolumn{1}{c|}{300}     & \multicolumn{1}{c|}{300}     & \multicolumn{1}{c|}{300}     & \multicolumn{1}{c|}{300}     \\ \hline
\end{tabular}
}}
\caption{\small Mitigation with controlled local training iterations}
\label{table:defense_iteration}
\vspace{-0.4cm}
\end{table}

\begin{figure}[ht]
 \centerline{\includegraphics[scale=.37]{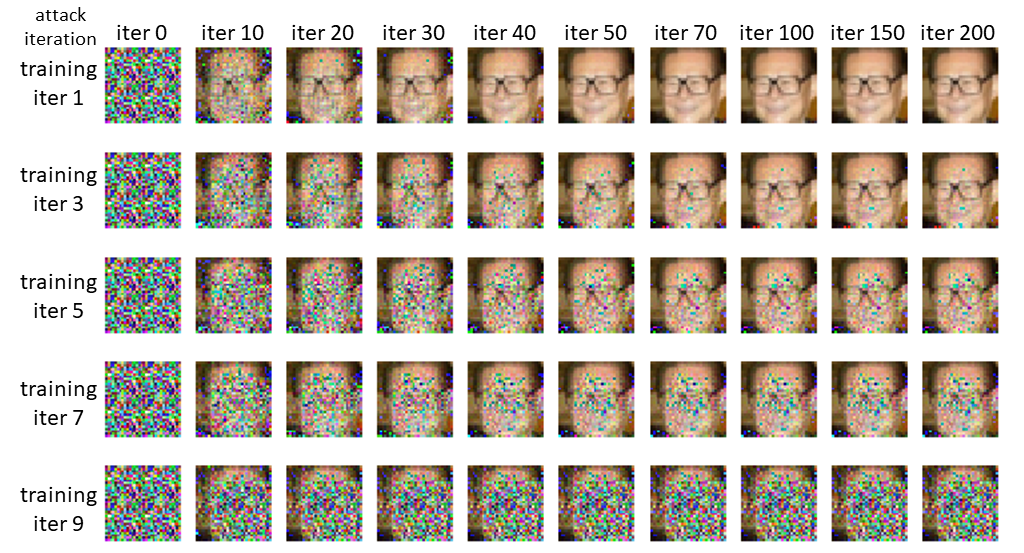}}
  \caption{\small Effect of local training on CPL-patterned attacks using LFW}
  \label{fig:local_iter_large}
\vspace{-0.4cm}
\end{figure}

%\vspace{-0.4cm}

\noindent \textbf{Gradient squeezing with controlled local training iterations.} Instead of sharing the gradient from the local training computation at each round $t$, we schedule and control the sharing of the gradient only after $M$ iterations of local training. We perform CPL attack after each local iteration. Table~\ref{table:defense_iteration} shows the results of varying $M$ from 1 to 10 with step 1. It shows that as $M$ increases, the ASR of CPL attack starts to decrease, with 97.1\%, 83.5\%, 50\% and 29.2\% for $M =$ 3, 5, 8 and 10 respectively for CIFAR10, and with 100\%, 97\%, 78\% and 7\% for $M =$ 3, 5, 8 and 10 respectively for LFW. This preliminary mitigation study shows that clients in federated learning may adopt some attack resilient optimizations when configuring their local training hyperparameters.  An example of gradient squeezing with controlled local training iterations is provided in Figure~\ref{fig:local_iter_large}.

%% We contribute the attack mitigation to the fact that the magnitude of gradient decreases with iterations of training. The more times a specific data piece  is seen by the local model, the less the model will fit on it. Therefore, we urge that federated clients should train some amount of local iterations before communicating with the server.

%%\noindent \textbf{Further discussion} We have seen that additive-noise approaches can mitigate CPL attack with some downgrade of benign accuracy. Meanwhile, the step of sorting the flattened gradients in gradient compression shown in section~\ref{sec3.2} can be computational-intensive, especially with deep and large network models. While many more sophisticated mitigation strategies can be further studied, such as local differential privacy~\cite{stacey2020federated}, client-level differential privacy~\cite{geyer2017differentially}, and parameter rotation~\cite{konevcny2016federated}, we consider the pursuit of low-cost defense as our future work.

\vspace{-0.4cm}
\section{Related work}

Privacy in federated learning has been studied in two contexts: training-phase privacy attacks and prediction-phase privacy attacks. Gradient leakage attacks, formulated in CPL of different forms, or those in literature~\cite{geiping2020inverting,zhu2019deep,zhao2020idlg}, are one type of privacy exploits in the training phase. In addition, Aono et al~\cite{aono2017privacy,aono2017privacyc} proposed a privacy attack, which partially recovers private data based on the proportionality between the training data and the gradient updates in multi-layer perceptron models. However, their attack is not suitable for convolutional neural networks because the size of the features is far larger than the size of convolution weights. Hitaj et al~\cite{hitaj2017deep} poisoned the shared model by introducing mislabeled samples into the training. In comparison, gradient leakage attacks are more aggressive since client privacy leakage attacks make no assumption on direct access to the training data as those in training poisoning attacks and yet can compromise the private training data by reconstruction attack based on only the local parameter updates to be shared with the federated server.

Privacy exploits at the prediction phase include model inversion, membership inference, and GAN-based reconstruction attack~\cite{hayes2017logan,hitaj2017deep,wang2019beyond}. Fredrikson et al~\cite{fredrikson2015model} proposed the model inversion attack to exploit confidence values revealed along with predictions. Ganju et al~\cite{ganju2018property} infers the global properties of the training data from a trained white-box fully connected neural network. Membership attacks~\cite{shokri2017membership,melis2019exploiting,stacey2020federated} exploit the statistical differences between the model prediction on its training set and the prediction on unseen data to infer the membership of training data with high confidence.

%% Most existing attacks against federated learning have a strong assumption of the malicious clients and more impractically, some attacks would compromise the performance of the shared model. For example, Melis et al~\cite{melis2019exploiting} changed the training objective, and Hitaj et al~\cite{hitaj2017deep} changed the shared model and introduced mislabeled samples into the training. We make much weaker assumptions about the attacker being honest-but-curious. Independently, Geiping~\cite{geiping2020inverting} presents how privacy is leaked under different machine learning model architecture. However, their reconstruction attack has a lower quality and a much high attack cost. While our framework currently supports the inference of the exact data, we will further integrate those partial data inference attacks \and partial property inference attacks and identify the key factors of their privacy leakage effect for mitigation strategy design.

\vspace{-0.4cm}
\section{Conclusion}

We have presented a principled framework for evaluating and comparing different forms of client privacy leakage attacks. We show how adversaries can reconstruct the private local training data by simply analyzing the shared parameter update (e.g., local gradient or weight update vector). We identify and analyze the impact of different attack configurations and different hyperparameter settings of federated learning on client privacy leakage attacks. Extensive experiments on four benchmark datasets highlight the importance of providing a systematic evaluation framework for an in-depth understanding of the various forms of client privacy leakage threats in federated learning and for developing and evaluating different mitigation strategies.

\bigskip\noindent\textbf{Acknowledgements.} The authors acknowledge the partial support from the National Science
Foundation under Grants SaTC 1564097, NSF 1547102, and an IBM
Faculty Award.

\clearpage

\bibliographystyle{splncs04}
%\bibliography{reference}

\begin{thebibliography}{10}
\providecommand{\url}[1]{\texttt{#1}}
\providecommand{\urlprefix}{URL }
\providecommand{\doi}[1]{https://doi.org/#1}

\bibitem{aono2017privacyc}
Aono, Y., Hayashi, T., Wang, L., Moriai, S., et~al.: Privacy-preserving deep
  learning: Revisited and enhanced. In: International Conference on
  Applications and Techniques in Information Security. pp. 100--110. Springer
  (2017)

\bibitem{aono2017privacy}
Aono, Y., Hayashi, T., Wang, L., Moriai, S., et~al.: Privacy-preserving deep
  learning via additively homomorphic encryption. IEEE Transactions on
  Information Forensics and Security  \textbf{13}(5),  1333--1345 (2017)

\bibitem{bagdasaryan2018backdoor}
Bagdasaryan, E., Veit, A., Hua, Y., Estrin, D., Shmatikov, V.: How to backdoor
  federated learning. arXiv preprint arXiv:1807.00459  (2018)

\bibitem{battiti1992first}
Battiti, R.: First-and second-order methods for learning: between steepest
  descent and newton's method. Neural computation  \textbf{4}(2),  141--166
  (1992)

\bibitem{bonawitz2019towards}
Bonawitz, K., Eichner, H., Grieskamp, W., Huba, D., Ingerman, A., Ivanov, V.,
  Kiddon, C., Konecny, J., Mazzocchi, S., McMahan, H.B., et~al.: Towards
  federated learning at scale: System design. In: Proceedings of the 2nd SysML
  Conference. pp. 619--633 (2018)

\bibitem{duchi2011adaptive}
Duchi, J., Hazan, E., Singer, Y.: Adaptive subgradient methods for online
  learning and stochastic optimization. Journal of machine learning research
  \textbf{12}(Jul),  2121--2159 (2011)

\bibitem{fletcher2013practical}
Fletcher, R.: Practical methods of optimization. John Wiley \& Sons (2013)

\bibitem{fredrikson2015model}
Fredrikson, M., Jha, S., Ristenpart, T.: Model inversion attacks that exploit
  confidence information and basic countermeasures. In: Proceedings of the 22nd
  ACM SIGSAC Conference on Computer and Communications Security. pp. 1322--1333
  (2015)

\bibitem{ganju2018property}
Ganju, K., Wang, Q., Yang, W., Gunter, C.A., Borisov, N.: Property inference
  attacks on fully connected neural networks using permutation invariant
  representations. In: Proceedings of the 2018 ACM SIGSAC Conference on
  Computer and Communications Security. pp. 619--633 (2018)

\bibitem{geiping2020inverting}
Geiping, J., Bauermeister, H., Dr{\"o}ge, H., Moeller, M.: Inverting
  gradients--how easy is it to break privacy in federated learning? arXiv
  preprint arXiv:2003.14053  (2020)

\bibitem{geyer2017differentially}
Geyer, R.C., Klein, T., Nabi, M.: Differentially private federated learning: A
  client level perspective. arXiv preprint arXiv:1712.07557  (2017)

\bibitem{convergencetheorem}
Gower, R.M.: Convergence theorem of gradient descent. Lecture notes for
  Statistical Optimization  (2018)

\bibitem{hayes2017logan}
Hayes, J., Melis, L., Danezis, G., De~Cristofaro, E.: Logan: evaluating privacy
  leakage of generative models using generative adversarial networks. arXiv
  preprint arXiv:1705.07663  (2017)

\bibitem{hitaj2017deep}
Hitaj, B., Ateniese, G., Perez-Cruz, F.: Deep models under the gan: information
  leakage from collaborative deep learning. In: Proceedings of the 2017 ACM
  SIGSAC Conference on Computer and Communications Security. pp. 603--618
  (2017)

\bibitem{huang2008labeled}
Huang, G.B., Mattar, M., Berg, T., Learned-Miller, E.: Labeled faces in the
  wild: A database forstudying face recognition in unconstrained environments.
  In: Technical report (2008)

\bibitem{janocha2017loss}
Janocha, K., Czarnecki, W.M.: On loss functions for deep neural networks in
  classification. arXiv preprint arXiv:1702.05659  (2017)

\bibitem{kamp2018efficient}
Kamp, M., Adilova, L., Sicking, J., H{\"u}ger, F., Schlicht, P., Wirtz, T.,
  Wrobel, S.: Efficient decentralized deep learning by dynamic model averaging.
  In: Joint European Conference on Machine Learning and Knowledge Discovery in
  Databases. pp. 393--409. Springer (2018)

\bibitem{kingma2014adam}
Kingma, D.P., Ba, J.: Adam: A method for stochastic optimization. In:
  International Conference on Learning Representations (2015)

\bibitem{konevcny2016federated}
Kone{\v{c}}n{\`y}, J., McMahan, H.B., Yu, F.X., Richt{\'a}rik, P., Suresh,
  A.T., Bacon, D.: Federated learning: Strategies for improving communication
  efficiency. In: NIPS Workshop on Private Multi-Party Machine Learning (2016)

\bibitem{krizhevsky2009learning}
Krizhevsky, A., Hinton, G., et~al.: Learning multiple layers of features from
  tiny images. In: Technical report (2009)

\bibitem{lecun1998mnist}
LeCun, Y., Cortes, C., Burges, C.J.: The mnist database of handwritten digits,
  1998. URL http://yann. lecun. com/exdb/mnist  \textbf{10}, ~34 (1998)

\bibitem{li2014scaling}
Li, M., Andersen, D.G., Park, J.W., Smola, A.J., Ahmed, A., Josifovski, V.,
  Long, J., Shekita, E.J., Su, B.Y.: Scaling distributed machine learning with
  the parameter server. In: 11th $\{$USENIX$\}$ Symposium on Operating Systems
  Design and Implementation ($\{$OSDI$\}$ 14). pp. 583--598 (2014)

\bibitem{lin2017deep}
Lin, Y., Han, S., Mao, H., Wang, Y., Dally, W.J.: Deep gradient compression:
  Reducing the communication bandwidth for distributed training. In:
  International Conference on Learning Representations (2018)

\bibitem{liu2020accelerating}
Liu, W., Chen, L., Chen, Y., Zhang, W.: Accelerating federated learning via
  momentum gradient descent. IEEE Transactions on Parallel and Distributed
  Systems  (2020)

\bibitem{ma2015adding}
Ma, C., Smith, V., Jaggi, M., Jordan, M.I., Richt{\'a}rik, P., Tak{\'a}{\v{c}},
  M.: Adding vs. averaging in distributed primal-dual optimization. In:
  Proceedings of the 32nd International Conference on Machine Learning-Volume
  37. pp. 1973--1982 (2015)

\bibitem{mcmahan2017communication}
McMahan, B., Moore, E., Ramage, D., Hampson, S., y~Arcas, B.A.:
  Communication-efficient learning of deep networks from decentralized data.
  In: Artificial Intelligence and Statistics. pp. 1273--1282 (2017)

\bibitem{mcmahan2017federated}
McMahan, B., Ramage, D.: Federated learning: Collaborative machine learning
  without centralized training data. Google Research Blog  \textbf{3} (2017)

\bibitem{mcmahan2016federated}
McMahan, H.B., Moore, E., Ramage, D., y~Arcas, B.A.: Federated learning of deep
  networks using model averaging. corr abs/1602.05629 (2016). arXiv preprint
  arXiv:1602.05629  (2016)

\bibitem{melis2019exploiting}
Melis, L., Song, C., De~Cristofaro, E., Shmatikov, V.: Exploiting unintended
  feature leakage in collaborative learning. In: 2019 IEEE Symposium on
  Security and Privacy (SP). pp. 691--706. IEEE (2019)

\bibitem{qian1999momentum}
Qian, N.: On the momentum term in gradient descent learning algorithms. Neural
  networks  \textbf{12}(1),  145--151 (1999)

\bibitem{robbins1951stochastic}
Robbins, H., Monro, S.: A stochastic approximation method. The annals of
  mathematical statistics pp. 400--407 (1951)

\bibitem{rossi1994geometrical}
Rossi, F., G{\'e}gout, C.: Geometrical initialization, parametrization and
  control of multilayer perceptrons: Application to function approximation. In:
  Proceedings of 1994 IEEE International Conference on Neural Networks
  (ICNN'94). vol.~1, pp. 546--550. IEEE (1994)

\bibitem{roy2019braintorrent}
Roy, A.G., Siddiqui, S., P{\"o}lsterl, S., Navab, N., Wachinger, C.:
  Braintorrent: A peer-to-peer environment for decentralized federated
  learning. arXiv preprint arXiv:1905.06731  (2019)

\bibitem{shokri2017membership}
Shokri, R., Stronati, M., Song, C., Shmatikov, V.: Membership inference attacks
  against machine learning models. In: 2017 IEEE Symposium on Security and
  Privacy (SP). pp. 3--18. IEEE (2017)

\bibitem{truex2018towards}
Truex, S., Liu, L., Gursoy, M.E., Yu, L., Wei, W.: Towards demystifying
  membership inference attacks. arXiv preprint arXiv:1807.09173  (2018)

\bibitem{stacey2020federated}
Turex, S., Liu, L., Chow, K.H., Gursoy, M.E., Wei, W.: Ldp-fed: Federated
  learning with local differential privacy. In: The 3rd International Workshop
  on Edge Systems, Analytics and Networking. ACM (2020)

\bibitem{vanhaesebrouck2016decentralized}
Vanhaesebrouck, P., Bellet, A., Tommasi, M.: Decentralized collaborative
  learning of personalized models over networks. In: Artificial Intelligence
  and Statistics (2017)

\bibitem{wang2019beyond}
Wang, Z., Song, M., Zhang, Z., Song, Y., Wang, Q., Qi, H.: Beyond inferring
  class representatives: User-level privacy leakage from federated learning.
  In: IEEE INFOCOM 2019-IEEE Conference on Computer Communications. pp.
  2512--2520. IEEE (2019)

\bibitem{wang2004image}
Wang, Z., Bovik, A.C., Sheikh, H.R., Simoncelli, E.P.: Image quality
  assessment: from error visibility to structural similarity. IEEE transactions
  on image processing  \textbf{13}(4),  600--612 (2004)

\bibitem{yang2019federated}
Yang, Q., Liu, Y., Chen, T., Tong, Y.: Federated machine learning: Concept and
  applications. ACM Transactions on Intelligent Systems and Technology (TIST)
  \textbf{10}(2),  1--19 (2019)

\bibitem{yao2019federated}
Yao, X., Huang, T., Zhang, R.X., Li, R., Sun, L.: Federated learning with
  unbiased gradient aggregation and controllable meta updating. arXiv preprint
  arXiv:1910.08234  (2019)

\bibitem{zhang1992wavelet}
Zhang, Q., Benveniste, A.: Wavelet networks. IEEE transactions on Neural
  Networks  \textbf{3}(6),  889--898 (1992)

\bibitem{zhao2020idlg}
Zhao, B., Mopuri, K.R., Bilen, H.: idlg: Improved deep leakage from gradients.
  arXiv preprint arXiv:2001.02610  (2020)

\bibitem{zhao2018federated}
Zhao, Y., Li, M., Lai, L., Suda, N., Civin, D., Chandra, V.: Federated learning
  with non-iid data. arXiv preprint arXiv:1806.00582  (2018)

\bibitem{zhu2019deep}
Zhu, L., Liu, Z., Han, S.: Deep leakage from gradients. In: Advances in Neural
  Information Processing Systems. pp. 14747--14756 (2019)

\end{thebibliography}

\section{Appendices}

\subsection{Proof of Theorem 1}

\begin{assumption}
(\textbf{Convexity}) we say $f(x)$ is convex if
\begin{equation}
f(\alpha x + (1-\alpha)x') \leq \alpha f(x) + (1-\alpha) f(x'),
\label{equa1}
\end{equation}
where $x,x'$ are data point in $\mathbb{R}^d$, and $\alpha \in [0,1]$.
\label{assumption1}
\end{assumption}

\begin{lemma}
If a convex $f(x)$ is differentiable, we have:
\begin{equation}
f(x') -f(x) \geq \langle \nabla f(x), x'-x \rangle.
\label{equa2}
\end{equation}
\label{lemma1}
\end{lemma}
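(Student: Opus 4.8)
The plan is to derive the first-order inequality directly from the definition of convexity in Assumption~\ref{assumption1} by parametrizing the segment from $x$ to $x'$ and passing to a limit. First I would reparametrize the convexity inequality: writing $\alpha = 1-t$ with $t \in (0,1]$, the convex combination $\alpha x + (1-\alpha) x'$ becomes $x + t(x'-x)$, so Assumption~\ref{assumption1} reads $f(x + t(x'-x)) \leq (1-t) f(x) + t f(x')$.

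Next I would rearrange this to isolate a difference quotient. Subtracting $f(x)$ from both sides gives $f(x + t(x'-x)) - f(x) \leq t\,[f(x') - f(x)]$, and dividing by $t > 0$ yields
\begin{equation}
\frac{f(x + t(x'-x)) - f(x)}{t} \leq f(x') - f(x).
\end{equation}
This inequality holds for every $t \in (0,1]$, with the right-hand side independent of $t$.

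The final step is to let $t \to 0^+$. Because $f$ is differentiable at $x$, the left-hand side converges to the directional derivative of $f$ at $x$ along $x'-x$, namely $\langle \nabla f(x), x'-x \rangle$. Since the inequality is non-strict it is preserved under the limit, which gives $\langle \nabla f(x), x'-x \rangle \leq f(x') - f(x)$, exactly the claim~\eqref{equa2}.

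The main obstacle is the limit passage: one must justify that the difference quotient indeed tends to $\langle \nabla f(x), x'-x \rangle$ (this is precisely the definition of the derivative under the differentiability hypothesis) and that the $\leq$ survives the limit. This is routine analysis, and the only point requiring care is restricting to $t > 0$ so that dividing by $t$ preserves the direction of the inequality.
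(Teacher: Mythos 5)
Your proof is correct and follows essentially the same route as the paper: rearranging the convexity inequality into a difference quotient and letting the parameter tend to $0^+$ so that it converges to the directional derivative $\langle \nabla f(x), x'-x \rangle$. In fact your write-up is cleaner than the paper's (which contains a typo, writing $f(x)-f(y)$ where $f(x)-f(x')$ is meant, and perturbs around $x'$ rather than $x$), and you correctly note the two points the paper leaves implicit, namely that $t>0$ preserves the inequality direction and that non-strict inequalities survive the limit.
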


\begin{proof}
 Equation~\ref{equa1} can be rewritten as:
 $$\frac{f(x'+\alpha(x-x'))-f(x')}{\alpha} \leq f(x)-f(y).$$

 When $\alpha \rightarrow 0$, we complete the proof.
\end{proof}

\begin{assumption}
(\textbf{Lipschitz Smoothness}) With Lipschitz continuous on the differentiable function $f(x)$, we have:
\begin{equation}
|| \nabla f(x) - \nabla f(x') \leq L||x-x'||,
\label{equa3}
\end{equation}
where $L$ is called Lipschitz constant.
\label{assumption2}
\end{assumption}

\begin{lemma}
If $f(x)$ is Lipschitz-smooth, we have:
\begin{equation}
f(x^{t+1}) -f(x^{t}) \leq - \frac{1}{2L}||\nabla f(x^T)||^2_2
\label{equa4}
\end{equation}
\label{lemma2}
\end{lemma}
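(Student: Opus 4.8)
The plan is to establish this as the standard \emph{descent lemma} for gradient descent on an $L$-smooth function, where the constant $\tfrac{1}{2L}$ signals that the iterates are generated with the step size $\eta = 1/L$, i.e. $x^{t+1} = x^{t} - \frac{1}{L}\nabla f(x^{t})$. The whole argument rests on upgrading the first-order Lipschitz condition of Assumption~\ref{assumption2} into a second-order quadratic upper bound on $f$, and then plugging the gradient step into that bound.

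First I would derive the quadratic upper bound
\begin{equation}
f(y) \leq f(x) + \langle \nabla f(x), y-x \rangle + \frac{L}{2}\|y-x\|^2,
\end{equation}
for all $x,y \in \mathbb{R}^d$. The route is the integral form of the fundamental theorem of calculus, writing $f(y)-f(x) = \int_0^1 \langle \nabla f(x + s(y-x)), y-x \rangle\, ds$, subtracting and adding $\langle \nabla f(x), y-x\rangle$, and then bounding the remainder $\int_0^1 \langle \nabla f(x+s(y-x)) - \nabla f(x),\, y-x \rangle\, ds$ via Cauchy--Schwarz together with Assumption~\ref{assumption2}, which gives the integrand $\leq Ls\|y-x\|^2$; integrating $s$ over $[0,1]$ yields the factor $\tfrac{L}{2}$.

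Second I would specialize this bound to one gradient-descent step by setting $y = x^{t+1}$ and $x = x^{t}$, using $x^{t+1}-x^{t} = -\frac{1}{L}\nabla f(x^{t})$. The inner-product term contributes $-\frac{1}{L}\|\nabla f(x^{t})\|^2$ and the quadratic term contributes $\frac{L}{2}\cdot\frac{1}{L^2}\|\nabla f(x^{t})\|^2 = \frac{1}{2L}\|\nabla f(x^{t})\|^2$, so the two combine to $-\frac{1}{2L}\|\nabla f(x^{t})\|^2$, giving exactly
\begin{equation}
f(x^{t+1}) - f(x^{t}) \leq -\frac{1}{2L}\|\nabla f(x^{t})\|_2^2,
\end{equation}
which is the claimed inequality (the $x^{T}$ appearing in the statement should read $x^{t}$).

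The main obstacle is the first step: turning the gradient-Lipschitz hypothesis into the quadratic majorant, since that is where the integral representation and the Cauchy--Schwarz bound do the real work, whereas the second and third steps are just substitution and arithmetic. I would also note that the $\tfrac{1}{2L}$ constant is tied to the choice $\eta=1/L$; a generic step size $\eta \le 1/L$ would yield the weaker descent $f(x^{t+1})-f(x^{t}) \le -\frac{\eta}{2}\|\nabla f(x^{t})\|^2$, so it is worth stating the step-size convention explicitly to match the stated constant.
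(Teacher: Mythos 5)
Your proof takes essentially the same route as the paper: both first establish the quadratic upper bound $f(x') \leq f(x) + \langle \nabla f(x), x'-x \rangle + \frac{L}{2}\|x'-x\|_2^2$ and then substitute the gradient step $x^{t+1} = x^{t} - \frac{1}{L}\nabla f(x^{t})$, with the identical arithmetic $-\frac{1}{L}\|\nabla f(x^{t})\|_2^2 + \frac{1}{2L}\|\nabla f(x^{t})\|_2^2 = -\frac{1}{2L}\|\nabla f(x^{t})\|_2^2$. The only (minor, and in fact favorable) difference is that you derive the quadratic bound via the integral form of the fundamental theorem of calculus plus Cauchy--Schwarz, using only the gradient-Lipschitz hypothesis, whereas the paper invokes a Taylor expansion with a uniform Hessian bound; your remarks that the $x^{T}$ in the statement should read $x^{t}$ and that the constant $\frac{1}{2L}$ encodes the implicit step size $\eta = 1/L$ are both correct.
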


\begin{proof}
Using the Taylor expansion of $f(x)$ and the uniform bound over Hessian matrix, we have
\begin{equation}
f(x') \leq f(x) + \langle \nabla f(x), x'-x \rangle + \frac{L}{2}||x'-x||^2_2.
\label{equa5}
\end{equation}
By inserting $x' = x - \frac{1}{L}\nabla f(x)$ into equation~\ref{equa3} and equation~\ref{equa5}, we have:
\begin{align*}
    f(x - \frac{1}{L}\nabla f(x)) -f(x) &\leq - \frac{1}{L} \langle \nabla f(x), \nabla f(x) \rangle + \frac{L}{2}||\frac{1}{L} \nabla f(x)||^2_2 \\
    &= - \frac{1}{2L}||\nabla f(x)||^2_2
\end{align*}
\end{proof}

\begin{lemma} \textbf{(Co-coercivity)}
A convex and Lipschitz-smooth $f(x)$ satisfies:
\begin{equation}
\langle \nabla f(x') - \nabla f(x), x'-x \rangle \geq \frac{1}{L} || \nabla f(x') - \nabla f(x)||
\label{equa7}
\end{equation}
\label{lemma3}
\end{lemma}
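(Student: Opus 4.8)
The plan is to prove the co-coercivity inequality by reducing it to the one-step descent estimate already established in Lemma~\ref{lemma2}, applied not to $f$ itself but to two auxiliary functions built from $f$ and its gradients at the two points. Concretely, I would fix $x,x'$ and define $g(z) = f(z) - \langle \nabla f(x), z \rangle$ and $h(z) = f(z) - \langle \nabla f(x'), z \rangle$. Since subtracting a linear functional leaves the Hessian unchanged, both $g$ and $h$ inherit convexity (Assumption~\ref{assumption1}) and $L$-smoothness (Assumption~\ref{assumption2}), so every estimate derived for a generic convex, Lipschitz-smooth function applies verbatim to them.

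The key observation is that $\nabla g(z) = \nabla f(z) - \nabla f(x)$ vanishes at $z = x$, so by convexity $x$ is a \emph{global} minimizer of $g$; symmetrically $x'$ minimizes $h$. I would then invoke Lemma~\ref{lemma2}: taking a single gradient step $z^{+} = z - \tfrac{1}{L}\nabla g(z)$ decreases $g$ by at least $\tfrac{1}{2L}\|\nabla g(z)\|^2$. Because $g(x) \le g(z^{+})$ at the minimizer $x$, this upgrades to a bound on the suboptimality gap, $g(x) \le g(z) - \tfrac{1}{2L}\|\nabla g(z)\|^2$, valid for every $z$. Evaluating at $z = x'$ and substituting $g(x') - g(x) = f(x') - f(x) - \langle \nabla f(x), x' - x \rangle$ together with $\nabla g(x') = \nabla f(x') - \nabla f(x)$ yields
\[
f(x) - f(x') + \langle \nabla f(x), x' - x \rangle \le -\tfrac{1}{2L}\|\nabla f(x') - \nabla f(x)\|^2 .
\]

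The remaining step is symmetry: repeating the argument with $h$ in place of $g$ (exchanging the roles of $x$ and $x'$) produces the mirror inequality $f(x') - f(x) + \langle \nabla f(x'), x - x' \rangle \le -\tfrac{1}{2L}\|\nabla f(x') - \nabla f(x)\|^2$. Adding the two inequalities cancels $f(x)$ and $f(x')$, while the two inner-product terms combine into $-\langle \nabla f(x') - \nabla f(x), x' - x \rangle$; rearranging delivers $\langle \nabla f(x') - \nabla f(x), x' - x \rangle \ge \tfrac{1}{L}\|\nabla f(x') - \nabla f(x)\|^2$, which is the claim (note the right-hand side should carry a square, absent from the displayed statement). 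I expect the main obstacle to be conceptual rather than computational: spotting that the shifted functions $g$ and $h$ are the correct objects, and that each attains its global minimum precisely at the \emph{other} point, so that the generic descent lemma becomes a suboptimality-gap bound. Once that device is in place, the rest is routine bookkeeping with linear and quadratic terms.
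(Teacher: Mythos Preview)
Your argument is correct and is in fact the standard proof of co-coercivity; you also correctly flag the missing square on the right-hand side of the displayed statement.

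Your route differs substantially from the paper's. The paper gives a one-line argument: citing only the Lipschitz bound (equation~\ref{equa3}), it asserts
\[
\langle \nabla f(x')-\nabla f(x),\, x'-x\rangle \;\ge\; \bigl\langle \nabla f(x')-\nabla f(x),\, \tfrac{1}{L}(\nabla f(x')-\nabla f(x))\bigr\rangle .
\]
This step is not justified: knowing $\|x'-x\|\ge \tfrac{1}{L}\|\nabla f(x')-\nabla f(x)\|$ does not allow one to replace $x'-x$ inside the inner product by the (shorter, but differently directed) vector $\tfrac{1}{L}(\nabla f(x')-\nabla f(x))$ while preserving the inequality. Indeed, the paper's argument never invokes convexity, yet co-coercivity fails for smooth nonconvex functions (e.g.\ $f(t)=\cos t$ with $L=1$, at $x=\pi/2$, $x'=-\pi/2$, where the left side is $-2\pi$ and the right side is $4$). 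Your approach, by contrast, uses convexity in an essential way---through the fact that the critical point of the shifted function $g$ is a \emph{global} minimizer---and then turns the descent estimate of Lemma~\ref{lemma2} into a suboptimality bound. What your route costs is a few more lines; what it buys is an actually complete proof, so you should keep your version rather than defer to the paper's.
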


\begin{proof}
Due to equation~\ref{equa3},
\begin{align*}
 \langle \nabla f(x') - \nabla f(x), x'-x \rangle &\geq    \langle \nabla f(x') - \nabla f(x), \frac{1}{L}(\nabla f(x') - \nabla f(x)) \rangle \\
 &= \frac{1}{L} || \nabla f(x') - \nabla f(x)||
\end{align*}

\end{proof}

Then we can proof the attack convergence theorem:  $f(x^T)-f(x^*) \leq  \frac{2L||x^0-x^*||^2}{T}.$

\begin{proof}

Let $f(x)$ be convex and Lipschitz-smooth. It follow that
\begin{align}
    ||x^{t+1}-x^*||^2_2 &= ||x^t-x^*-\frac{1}{L} \nabla f(x^t)||^2_2 \nonumber \\
    &= ||x^t-x^*||^2_2 - 2\frac{1}{L}\langle x^t-x^*, \nabla f(x^t) \rangle + \frac{1}{L^2}||\nabla f(x^t)||^2_2 \nonumber \\
    &\leq ||x^t-x^*||^2_2 - \frac{1}{L^2}||\nabla f(x^t)||^2_2 \label{equa8}
\end{align}

Equation~\ref{equa8} holds due to equation~\ref{equa7} in lemma~\ref{lemma3}. Recall equation~\ref{equa4} in lemma~\ref{lemma2}, we have:
\begin{equation}
    f(x^{t+1}) -f(x^*) \leq f(x^{t}) -f(x^*) - \frac{1}{2L}||\nabla f(x^t)||^2_2.
     \label{equa9}
\end{equation}

By applying convexity,
\begin{align}
    f(x^t)-f(x^*) &\leq \langle \nabla f(x^t), x^t-x^* \rangle \nonumber \\
&\leq  ||\nabla f(x^t)||_2||x^t-x^*|| \nonumber \\
&\leq ||\nabla f(x^t)||_2||x^1-x^*||. \label{equa10}
\end{align}

Then we insert equation~\ref{equa10} into equation~\ref{equa9}:
\begin{align}
 &f(x^{t+1}) -f(x^*) \leq f(x^{t}) -f(x^*) -  \frac{1}{2L}\frac{1}{||x^1-x^*||^2}( f(x^t)-f(x^*))^2 \nonumber \\
 &\Rightarrow \frac{1}{f(x^{t}) -f(x^*)} \leq \frac{1}{f(x^{t+1}) -f(x^*)} - \beta \frac{f(x^t)-f(x^*)}{f(x^{t+1})-f(x^*)} \label{equa11} \\
 & \Rightarrow \frac{1}{f(x^{t}) -f(x^*)} \leq \frac{1}{f(x^{t+1}) -f(x^*)} - \beta \label{equa12} \\
 & \Rightarrow \beta \leq \frac{1}{f(x^{t+1}) -f(x^*)} - \frac{1}{f(x^{t}) -f(x^*)},
\end{align}
where $\beta = \frac{1}{2L}\frac{1}{||x^1-x^*||^2}$. Equation~\ref{equa11} is done by divide both side with $(f(x^{t+1}) -f(x^*))(f(x^{t}) -f(x^*))$ and Equation~\ref{equa12} utilizes $f(x^{t+1}) -f(x^*) \leq f(x^{t}) -f(x^*)$.
Then, following by induction over $t=0,1,2,..T-1$ and telescopic cancellation, we have $$T\beta \leq \frac{1}{f(x^{T}) -f(x^*)} - \frac{1}{f(x^{0}) -f(x^*)} \leq \frac{1}{f(x^{T}) -f(x^*)}.$$
\begin{align}
 &T\beta \leq \frac{1}{f(x^{T}) -f(x^*)} - \frac{1}{f(x^{0}) -f(x^*)} \leq \frac{1}{f(x^{T}) -f(x^*)}  \\
 &\Rightarrow \frac{T}{2L}\frac{1}{||x^1-x^*||^2} \leq \frac{1}{f(x^{T}) -f(x^*)} \\
  &\Rightarrow f(x^{T}) -f(x^*) \leq \frac{2L||x^0-x^*||^2}{T}.
\end{align}
Thus complete the proof.
\end{proof}

\end{document}